\newtheorem{assumption}{Assumption}
\def\Pr{\mathop{\rm Pr}}
\def\P{{\mathcal P}}
\def\sX{{\mathds X}}
\def\sY{{\mathds Y}}
\def\sA{{\mathds A}}
\newcommand{\R}{\mathds{R}}
\newcommand{\Zplus}{\mathbb{Z}_+}
\newcommand{\N}{\mathbb{N}}
\newcommand{\dd}{\mathrm{d}}
\begin{document}

\title{Q-Learning for MDPs with General Spaces: Convergence and Near
Optimality via Quantization under Weak Continuity\thanks{
This research was supported in part by
the Natural Sciences and Engineering Research Council (NSERC) of Canada.\\
To appear in Journal of Machine Learning Research}
}

\author{\name Ali Devran Kara \email alikara@umich.edu \\
       \addr Department of Mathematics\\
       University of Michigan\\
       Ann Arbor, MI 48109-1043, USA
       \AND
	\name Naci Saldi \email naci.saldi@bilkent.edu.tr\\
	\addr  Department of Mathematics\\
	Bilkent University\\
	Ankara, Turkey
	\AND
       \name  Serdar Y\"uksel \email yuksel@queensu.ca \\
       \addr  Department of Mathematics and Statistics\\
       Queen's University\\
      Kingston, ON, Canada}

\editor{}

\maketitle

\begin{abstract}%
Reinforcement learning algorithms often require finiteness of state and action spaces in Markov decision processes (MDPs) (also called controlled Markov chains) and various efforts have been made in the literature towards the applicability of such algorithms for continuous state and action spaces. In this paper, we show that under very mild regularity conditions (in particular, involving only weak continuity of the transition kernel of an MDP), Q-learning for standard Borel MDPs via quantization of states and actions (called Quantized Q-Learning) converges to a limit, and furthermore this limit satisfies an optimality equation which leads to near optimality with either explicit performance bounds or which are guaranteed to be asymptotically optimal. Our approach builds on (i) viewing quantization as a measurement kernel and thus a quantized MDP as a partially observed Markov decision process (POMDP), (ii) utilizing near optimality and convergence results of Q-learning for POMDPs, and (iii) finally, near-optimality of finite state model approximations for MDPs with weakly continuous kernels which we show to correspond to the fixed point of the constructed POMDP. Thus, our paper presents a very general convergence and approximation result for the applicability of Q-learning for continuous MDPs.
 \end{abstract}

 \begin{keywords}
Reinforcement learning, stochastic control, finite approximation
\end{keywords}

\section{Introduction}

Let $\mathds{X} $ be a Borel set in which the elements of a controlled Markov chain $\{X_t,\, t \in \Zplus\}$ take values.  Here and throughout the paper, $\Zplus$ denotes the set of non-negative integers and $\mathds{N}$ denotes the set of positive integers. Let $\mathds{U}$, the action space, be a compact Borel subset of some Euclidean space, from which the sequence of control action variables $\{U_t,\, t \in \Zplus\}$ take values. 

The $\{U_t, \, t \in \mathbb{Z}_+\}$, are generated via admissible control policies: An {\em admissible policy} $\gamma$ is a sequence of control functions $\{\gamma_t,\, t\in \Zplus\}$ such that $\gamma_t$ is measurable on the $\sigma$-algebra generated by the information variables
\[
I_t=\{X_0,\ldots,X_t,U_0,\ldots,U_{t-1}\}, \quad t \in \mathds{N}, \quad
  \quad I_0=\{X_0\},
\]
where
\begin{equation}
\label{eq_control}
U_t=\gamma_t(I_t),\quad t\in \Zplus,
\end{equation}
are the $\mathds{U}$-valued control
actions.
\noindent We define $\Gamma$ to be the set of all such admissible policies.

The joint distribution of the state and control
processes is then completely determined by (\ref{eq_control}), the initial probability measure of $X_0$, and the following
relationship:
\begin{align}\label{eq_evol}
& \Pr\biggl( X_t\in B \, \bigg|\, (X,U)_{[0,t-1]}=(x,u)_{[0,t-1]} \biggr) = \int_B \mathcal{T}( dx_t|x_{t-1}, u_{t-1}),  B\in \mathcal{B}(\mathds{X}), t\in \mathds{N},
\end{align}
where $\mathcal{T}(\cdot|x,u)$ is a stochastic kernel  (that is, a regular conditional probability measure) from $\mathds{X}\times \mathds{U}$ to $\mathds{X}$, $\mathcal{B}(\mathds{X})$ is the Borel $\sigma$-algebra of $\mathds{X}$, and $(X,U)_{[0,t-1]}$ is the set of state-action pairs up until $t-1$.

The objective of the controller is to minimize the infinite-horizon discounted expected cost
  \begin{align*}
    J_{\beta}(x_0,\gamma)= E_{x_0}^{{\mathcal{T}},\gamma}\left[\sum_{t=0}^{\infty} \beta^tc(X_t,U_t)\right]
  \end{align*}
over the set of admissible policies $\gamma\in\Gamma$, where $0<\beta<1$ is the discount factor, $c:\mathds{X}\times\mathds{U}\to\R$ is the stage-wise continuous and bounded cost function, and $E_{x_0}^{{\mathcal{T}},\gamma}$ denotes the expectation with initial state $x_0$ and transition kernel $\mathcal{T}$ under policy $\gamma$. For any initial state $X_0=x_0$, the optimal value function is defined by
\begin{align*}
  J_{\beta}^*(x_0)&=\inf_{\gamma\in\Gamma} J_{\beta}(x_0,\gamma).
\end{align*}
To calculate the optimal value function and the optimal control policy, various numerical approaches can be adopted, e.g., value iteration, policy iteration, linear programming (\cite{HernandezLermaMCP}) under the assumption that the transition probability $\mathcal{T}$ and the cost function $c$ are known. If the model is unknown, a powerful and popular tool is the Q-learning algorithm by \cite{Watkins}. The Q-learning algorithm provides an iterative approach that is guaranteed to converge under mild assumptions if the model is finite and if the controller has access to the state and cost realizations.

In this paper, we present a very general result on the applicability and near-optimality of Q-learning for setups where the state and action spaces are standard Borel (i.e., Borel subsets of complete, separable and metric spaces).

In what follows, we first provide a review of the related literature and some background.

\subsection{Literature Review}

The Q-learning algorithm  (see \cite{Watkins,TsitsiklisQLearning,Baker,CsabaLittman}) is a stochastic approximation algorithm that does not require the knowledge of the transition kernel, or even the cost (or reward) function for its implementation. In this algorithm, the incurred per-stage cost variable is observed through simulation of a single sample path. When the state and the action spaces are finite, under mild conditions regarding infinitely often hits for all state-action pairs, this algorithm is known to converge to the optimal cost {and arrive at optimal policies}. 

In this paper, our focus is on the case with continuous spaces. While this setup has attracted significant interest in the literature, there remain significant open questions on rigorous approximation or convergence bounds, as we discuss further below.

The approach we present for continuous spaces is via quantization and by viewing quantized MDPs as partially observed Markov decision processes (POMDPs). We will establish convergence and rigorous near optimality results.

\subsubsection{Near optimality of quantized MDPs} 

For MDPs with continuous state spaces, existence for optimal solutions has been well studied. Under either weak continuity of the kernel (in both the state and action), or strong continuity (of the kernel in actions for every state) properties and measurable selection conditions, dynamic programming and Bellman's equations of optimality lead to existence results. The corresponding measurable selection criteria are given by \citet[Theorem 2]{himmelberg1976optimal}, \cite{Schal}, \cite{schal1974selection} and \cite{kuratowski1965general}. We also refer the reader to   \cite{HernandezLermaMCP} for a comprehensive analysis and detailed literature review and \citet[Theorem 2.1]{feinberg2021mdps}. 

However, the above do not directly lead to computationally efficient methods. Accordingly, various approaches have been developed in the literature, with particularly intense recent research activity, to compute approximately optimal policies by reducing the original problem into a simpler one. A partial list of these techniques is as follows: approximate dynamic programming, approximate value or policy iteration, simulation-based techniques, neuro-dynamic programming (or reinforcement learning), state aggregation, etc. \citep[see e.g.][]{DuPr12,Ber75,chow1991optimal,BertsekasTsitsiklisNeuro}. Indeed, for MDPs, numerical methods have been studied under very general models with a comprehensive review available by \cite{SaLiYuSpringer}. Notably, as it is related to our analysis in this paper, \cite{saldi2014near,SaldiLinderYukselTAC14,SaYuLi15e,SaYuLi15c} have shown that under weak continuity conditions for an MDP with standard Borel state and action spaces, finite models obtained by the quantization of the state and action spaces lead to control policies that are asymptotically optimal as the quantization rate increases, where, under further regularity conditions, rates of convergence relating error decay and the number of quantization bins are also obtained. We will make explicit connections throughout our paper.

Despite the above mentioned rigorous results for near optimality under very {\it weak} conditions, a corresponding reinforcement learning result for such quantized MDPs with conclusive results on convergence and near optimality under similarly weak conditions does not yet exist despite many related studies which demand more restrictive conditions. A common approach for reinforcement learning for continuous spaces is through using function approximation for the optimal value function \citep[see][]{CsabaAlgorithms,tsitsiklis1997analysis}. The function approximation is usually done using neural networks, state aggregation, or through a linear approximation with finitely many linearly independent basis functions. For the neural network approximations, the results typically lack convergence proofs. For state aggregation and linear approximation methods, while often convergence is studied, the error analysis regarding the limit of the stochastic iterates is typically not studied in general or an error analysis is not provided at all. Some related work is done by \cite{singh1995reinforcement,melo2008analysis,gaskett1999q,CsabaSmart} and references therein: \cite{melo2008analysis} consider compact models with no error analysis with regard to the limit Q function and the optimal policy. In the context of finite space models, \citet[Chapter 6]{BertsekasTsitsiklisNeuro} and \cite{singh1995reinforcement} consider state-aggregation, with the latter studying a soft version, and establish the convergence of the limit iterates. By considering more general (i.e., continuous) spaces, \cite{CsabaSmart} generalize the above by the use of Q function approximators (interpolators) that are sufficiently regular (defined by non-expansiveness) in their parametric representation and establish both convergence and optimality properties. {A further recent related study along similar lines is \cite{SongWen2019} which imposes Lipschitz regularity conditions on the $Q$ functions}. {Another related direction for continuous models is (model-based) kernel estimation methods \citep[see][]{ormoneit2002kernel,ormoneit2002average,Sinclair2020}}. For kernel estimation methods, it is typically assumed that the transition probabilities admit a highly regular density function, and the density function, and thus the value functions and optimal policies, are learned using approximating kernel functions in a consistent fashion; for this method, independently generated state pairs are used rather than a single sample path. 

Different from the studies above, we will consider MDPs with continuous state and action spaces and with only weakly continuous transition kernels, and establish both convergence and near optimality results. In addition, we will also consider slightly stronger transition kernels, to arrive at stronger convergence results. We note that our approach to be presented can be referred to as state aggregation, although we usually refer to it as the quantization of the state space, and it can also be seen as a linear function approximation where the basis functions are constant over the quantization bins and zero elsewhere. Due to this special approximation structure, we are able to provide a finite MDP model for the limit Q-values, and thus, we can have more insight and intuition on the analysis of the error term (see Remark \ref{compremark} for further discussion).

One particularly related paper that is closely related our setup is by \cite{NNQlearning} where the authors study the finite sample analysis of a quantized Q-learning method via nearest neighbor mapping. \cite{NNQlearning} assume that the transition model admits a Lipschitz continuous density function with respect to the Lebesgue measure. In our work, we study weaker and more general MDP models where we show that only the weak continuity of the transition models are sufficient for asymptotic consistency or Wasserstein type metrics for convergence rates which do not require continuous density assumptions. Furthermore, we use a general mapping for the quantization as opposed to a nearest neighbor map. We also note that \cite{NNQlearning} study finite sample setup with fast mixing conditions on the transition model which in turn implies geometric convergence to the invariant measure of the controlled process, whereas we only focus on the asymptotic time analysis with weaker stability assumptions on the process. 

{One reason for the challenges of reinforcement learning theory for quantized models is that quantized MDPs are no longer true MDPs with respect to the probabilistic flow of a true model (even though as an analytical construction quantized MDPs can be designed to be constructed as actual MDPs towards constructing near optimal policies); this essentially generalizes the intuitive and correct result that when one quantizes a Markov process, the quantized outputs are no longer Markovian}. This question leads us to the next discussion.

\subsubsection{Convergence of Q-learning for POMDPs}  {A stochastic control model where the controller can have access to only a noisy or partial version of the state via measurements is called a Partially Observed Markov Decision Process (POMDP)}. Learning in POMDPs is challenging, mainly due to the non-Markovian behavior of the observation process. A question, which has recently been studied by \cite{kara2021convergence} (see also \cite{kara2020near}) is the following: 
\begin{itemize}
\item[(i)] whether a Q-learning algorithm for such a setup would indeed converge, 
\item[(ii)] if it does, where does it converge to?
\end{itemize} The answer to the first part of the question is positive under mild conditions \citep[see][for the case with unit memory]{singh1994learning} and \cite{CsabaSmart}; and the answer to the second part of the question is; under filter stability conditions, the convergence is to near optimality with an explicit error bound between the performance loss and the memory window size.   \cite{kara2021convergence} provide a detailed analysis and literature review.

To be more concrete, a natural attempt to learn POMDPs would be to ignore the partial observability and pretend that the noisy observations reflect the true state. For example, for infinite horizon discounted cost problems, one can construct Q-iterations as:
\begin{align}\label{QPOMDP}
&Q_{k+1}(y_k,u_k)=(1-\alpha_k(y_k,u_k))Q_k(y_{k},u_k)\nonumber\\
&\phantom{xxxxxxxxxxxxxxx}+\alpha_k(y_k,u_k)\left(C_k(y_k,u_k)+\beta \min_v Q_k(Y_{k+1},v)\right)
\end{align}
where $y_k$ represents the observations and $u_k$ represents the control actions, $0<\beta<1$ is the discount factor, and $\alpha_k$'s are the learning rates. 
We can further improve this algorithm by using not only the most recent observation but a finite window of past observations and control actions. 

However, the joint process of the observation and the control variables is not a controlled Markov process (as only $(X_k,U_k)$ is), and hence the convergence does not follow directly from usual techniques \citep[see][]{jaakkola1994convergence, TsitsiklisQLearning}. Even if the convergence is guaranteed, it is not immediate what the limit Q-values are, and whether they are meaningful at all. In particular, it is not known what MDP model gives rise to the limit Q-values.  \cite{singh1994learning} studied the Q-learning algorithm for POMDPs by ignoring the partial observability and constructing the algorithm using the most recent observation variable as in (\ref{QPOMDP}), and established convergence of this algorithm under mild conditions (notably that the hidden state process is uniquely ergodic under the exploration policy which is random and puts positive measure to all action variables); see also \cite{CsabaSmart}. \cite{kara2021convergence} considered memory sizes of more than zero for the information variables and a continuous state space. It was shown that the Q-iterations constructed using finite history variables converge under mild assumptions on the hidden state process and filter stability, and that the limit fixed point equation corresponds to an optimal solution for an approximate belief-MDP model and established bounds for the performance loss of the policy obtained using the approximate belief-MDP when it is used in the original model.

 {\bf Contributions and Main Results.}
In this paper, we present, in part by unifying and generalizing the aforementioned ingredients, very general results on the convergence and near optimality of Q-learning for quantized MDPs for non-compact state and compact action spaces. We list the main results of the paper as follows:
\begin{itemize}
\item In Section \ref{finite_app_sec}, we study a finite approximation method for continuous MDP models. In particular, 
\begin{itemize}
\item In Theorem \ref{tv_thm} and \ref{tv_thm_robust}, we show that the finite state approximations for models with total variation continuous kernels, are nearly optimal, and the error bound is in terms of the expected accumulated quantization error.

\item In Theorem \ref{wass_thm} and \ref{wass_thm_robust}, we show that the finite state approximations for models with transition kernels continuous under the first order Wasserstein distance, are nearly optimal, and the error bound is in terms of the uniform quantization error.
\item Theorem \ref{weak_thm}, under weak continuity conditions on the kernel, shows that finite state approximations are asymptotically optimal as the number of bins approach infinity.
\end{itemize}

In the following, all the presented results except Theorem 7 hold true for complete, separable and metric spaces (that is, Polish spaces), and not only for Euclidean spaces; for Theorem \ref{weak_thm} we also require the space to be $\sigma$-compact (that is, $\mathds{X} = \cup_{k=1}^{\infty} B_k$ with each $B_k$ compact). However, for clarity in presentation, for most of the results in the following we will consider the state space to be finite dimensional Euclidean, with the generalization to more general metric (Polish) spaces being mostly mechanical, where one needs to replace the norm with the corresponding metric on $\mathds{X}$. 


\item In Section \ref{pomdp_sec}, we construct an approximate Q learning algorithm by viewing the quantized models as an artificial POMDP, and in Theorem \ref{main_thm}, we establish that this approximate (quantized) Q learning algorithm converges to the optimality equation for the finite models constructed in Section \ref{finite_app_sec}. Hence, error bounds are provided for the learned policy in Section \ref{error_analysis}. In particular
\begin{itemize}
\item In Corollary \ref{cor3}, we show that the policies learned via the Q learning algorithm are asymptotically optimal with the increasing quantization rate, when the transition kernel of the model is weakly continuous.
\item In Corollary \ref{cor1}, we establish a convergence rate for the error of the learned policy, when the transition kernel of the model is continuous in total variation, using Theorem \ref{tv_thm} and \ref{tv_thm_robust}.
\item In Corollary \ref{cor2}, we establish a convergence rate for the error of the learned policy, when the transition kernel of the model is continuous under Wasserstein distance (first order), using Theorem \ref{wass_thm} and \ref{wass_thm_robust}.
\end{itemize}
\end{itemize}

The proposed method is explained in detail in Section \ref{model_intro}. The algorithm can be summarized in the following steps:
\begin{itemize}
\item Step 1: Quantize the action space.
\item Step 2: Quantize the state space (since the state space is not compact in general, quantization may be non-uniform).
\item Step 3: Run Q-learning on the finite model via (\ref{q_alg3}).
\item Step 4: Apply the resulting control policy on the true model by extending it to the true state space (e.g. the resulting policy will be a piece-wise constant function on the state space if we use constant extension over the quantization bins).
\end{itemize}


\subsection{Proposed Approach for Continuous Spaces}

In this section, we first review the traditional Q learning algorithm for finite MDP models and we explain the challenges of application of the algorithm to models with continuous state and action spaces. We finally present our proposed approach for the learning problem in continuous models.

\subsubsection{Review of Q-Learning for Fully Observed Finite Models}
We start with the discounted cost optimality equation (DCOE) for finite models given by
\begin{align*}
&J_\beta^*(x)=\min_{u\in\mathds{U}}\left\{c(x,u)+\beta \sum_{y \in \mathds{X}}J^*_\beta(y)\mathcal{T}(y|x,u)\right\}.
\end{align*} 
A real-valued function on the state space $\mathds{X}$ satisfies the DCOE if and only if it is the optimal value function \cite[Theorem 4.2.3]{HernandezLermaMCP}, thus, DCOE is a key tool for the optimality analysis of infinite horizon discounted cost problems. 

Note that the optimal value function is defined for every state. We now introduce the optimal Q-function defined for every state and action pair, which satisfy the following fixed point equation
\begin{eqnarray}\label{Q_alg}
Q^*(x,u)=c(x,u)+\beta\sum_{y \in \mathds{X}}\min_vQ^*(y,v) \mathcal{T}(y|x,u).
\end{eqnarray} 
Furthermore, the optimal Q-function satisfies the following relation for all $x \in \mathds{X}$:
\begin{align*}
\min_{v \in \mathds{U}} Q^*(x,v)=J_\beta^*(x).
\end{align*} 
The deterministic stationary policy that minimizes the above equation for any $x \in \mathds{X}$ is the optimal policy. Hence, if one knows the optimal Q-function, optimal value function and an optimal policy can be calculated. The optimal Q-function can be calculated by applying the contractive operator on the right side of the equation (\ref{Q_alg}) iteratively starting from some initial Q-function. This is the value iteration algorithm for Q-functions and convergence of this algorithm to optimal Q-function follows from the Banach fixed point theorem.

If the transition kernel $\mathcal{T}$ and the stage-wise cost $c$ are not available, one can apply the Q-learning algorithm to obtain optimal Q-function. In this algorithm, the decision maker applies an arbitrary admissible policy $\gamma$ and collects realizations of state, action, and stage-wise cost under this policy:
$$
X_0,U_0,c(X_0,U_0),X_1,U_1,c(X_1,U_1), \ldots. 
$$
Using this collection, it updates its Q-functions as follows: for $t\geq0$, if $(X_t,U_t)=(x,u)$, then
    \begin{eqnarray}\label{FiP}
    Q_{t+1}(x,u)  =   Q_t(x,u)   +  \alpha_t(x,u) \left( c(x,u)+\beta \min_{v \in \mathds{U}} Q_t(X_{t+1},v) -Q_t(x,u) \right)
    \end{eqnarray}
where the initial condition $Q_0$ is given, $\alpha_t(x,u)$ is the step-size for $(x,u)$ at time $t$, $U_t$ is chosen according to exploration policy $\gamma$, and the random state $X_{t+1}$ evolves according to $\mathcal{T}(X_{t+1} \in \cdot \ | X_t=x,U_t=u)$ starting at $X_0=x$. Under the following conditions, the iterations given in (\ref{FiP}) will converge to the optimal Q-function $Q^*$ almost surely. 

\smallskip 

\begin{assumption}\label{as:alpha2}
For all $(x,u)$ and for all $t \geq 0$, we have
\begin{itemize}
\item[a)] $\alpha_t(x,u)\in[0,1].$
\item[b)] $\alpha_t(x,u)=0$ unless $(x,u)=(X_t,U_t).$
\item[c)] $\alpha_t(x,u)$ is a (deterministic) function of $(X_{0},U_{0}),\dots,(X_t,U_t)$. 
\item[d)] $\sum_{t \geq 0} \alpha_t(x,u) = \infty$, almost surely.
\item[e)] $\sum_{t \geq 0} \alpha_t^2(x,u) \leq C$, almost surely, for some constant $C<\infty$.
\end{itemize}
\end{assumption}

\smallskip


Hence, Q-learning iterations can be used to calculate the optimal value function and an optimal policy if one has access to state and stage-wise cost realizations. However, this approach is tailored for finite models, and in particular the assumption that every $(x,u)$ pair is visited infinitely often is not feasible for continuous state and action spaces.

\subsubsection{Challenges and the Proposed Approach for Continuous Spaces}\label{model_intro}

As we noted in the previous section, for continuous spaces, one cannot visit every sate and action pair infinitely often. Hence, traditional Q-learning algorithm given in (\ref{FiP}) is not directly applicable. To overcome this obstacle, we will reduce the original problem to a finite one by quantizing the state space and modify the Q-iteration algorithm accordingly. For the moment, we assume that the action space $\mathds{U}$ is finite; this will be addressed later (in particular, we will show that under mild conditions to be given which only involve weak continuity, $\mathds{U}$ can be replaced with a finite subset for any approximation error tolerance).  

Let $\mathds{Y} \subset \mathds{X}$ be a finite set, which approximates the original state space $\mathds{X}$ of the model. Define a mapping $q:\mathds{X}\to\mathds{Y}$ such that for any $x\in\mathds{X}$, $q(x)=y$ for some $y\in\mathds{Y}$. For a continuous state space $\mathds{X}$, $q$ can be seen as the discretization mapping. For example, one can choose a collection of disjoint Borel measurable sets $\{B_i\}_{i=1}^M$ such that $\bigcup_i B_i=\mathds{X}$ and $B_i\bigcap B_j =\emptyset$ for any $i\neq j$. Furthermore, one can choose a representative state, $y_i\in B_i$, from each disjoint set. For this setting, we have 
$\mathds{Y}:=\{y_1,\dots,y_M\}$ and
\begin{align*}
q(x)=y_i \quad \text{ if } x\in B_i.
\end{align*}
We note that the set $\mathds{Y}=\{y_1,\dots,y_M\}$ is only a set of representative states for the collection of disjoint sets $\{B_i\}_{i=1}^M$ and the actual values of the representative states do not affect the error analysis and the overall algorithm performance. Instead, the collection of sets $\{B_i\}_{i=1}^M$ is the key element of the performance. The  sets $\{B_i\}_{i=1}^M$ can be chosen depending on the application, e.g., to minimize the loss function $L(x)$ (see (\ref{loss_func})). Quantization theory deals with the optimal design of such maps under a cardinality constraint  \citep[see][]{GrayNeuhoff}.

We also note that even for a finite state space $\mathds{X}$, in order to reduce size of the state space, one can choose a smaller set $\mathds{Y}$, by collecting multiple states in one group. 

Using the map $q$, we construct the following Q-learning algorithm. Again, the decision maker applies an arbitrary admissible {exploration} policy $\gamma$ and collects realizations of state, action, and stage-wise cost under this policy:
$$
X_0,U_0,c(X_0,U_0),X_1,U_1,c(X_1,U_1), \ldots. 
$$  
Using this collection, it updates its Q-functions defined only for state-action pairs in $\mathds{Y}\times\mathds{U}$ as follows: for $t\geq0$, if $(X_t,U_t)=(x,u)\in\mathds{X}\times\mathds{U}$, then
\begin{align}\label{q_alg2}
&Q_{t+1}(q(x),u)=(1-\alpha_t(q(x),u)) \, Q_t(q(x),u) \nonumber \\
&\phantom{xxxxxxxxxxxxxxxx}+\alpha_t(q(x),u)\left(c(x,u)+\beta \min_{v \in \mathds{U}} Q_t(q(X_{t+1}),v)\right), 
\end{align}
that is, for any true value of the state, we use its representative state from the finite set $\mathds{Y}$ when updating the Q-function.

We have now reduced the iterations to a finite set $\mathds{Y}\times\mathds{U}$, and therefore, it is feasible to visit  every pair $(y,u)$ infinitely often. However, one cannot directly argue that the iterations in (\ref{q_alg2}) will converge to a Q-function satisfying some fixed point equation. Even if the convergence is guaranteed, one needs to give a meaning to the limit fixed point equation, i.e., we need to construct the approximate model whose optimal Q-function satisfies the limit fixed point equation. Two main challenges for the convergence are the following:
\begin{itemize}
\item For the convergence of the traditional Q-iterations defined in (\ref{FiP}), it is a crucial assumption that the state process $X_t$ is controlled Markov chain. However, the process $q(X_t)$ is not a controlled Markov chain; that is, for all $t\geq0$, 
\begin{eqnarray*}
& \Pr\biggl( q(X_t) \, \bigg|\, (q(X),U)_{[0,t-1]} \biggr) \neq \Pr\biggl( q(X_t) \, \bigg|\, q(X_{t-1}),U_{t-1} \biggr).
\end{eqnarray*}
The reason is that $q(X_{t-1})$ gives only partial information about the true state $X_{t-1}$. 
\item In the algorithm, for any iteration step $t\geq0$, the realized stage-wise cost $c(X_t,U_t)$ is not a function of the quantized state $q(X_t)$. 
\end{itemize} 

\noindent To overcome these challenges, we will view the approximate finite model as a partially observed MDP (POMDP) where the map $q$ induces a quantizer channel from $\mathds{X}$ to $\mathds{Y}$, so that, the controller does not have full access to the true value of the state $x$ but it observes a quantized version of the true state value. We will then use recent results for convergence of Q-learning algorithms for POMDPs by \cite{kara2021convergence}. Finally, we will prove that the limiting Q-function is the optimal Q-function of the finite approximate model introduced by \cite{SaLiYuSpringer,SaYuLi15c}, and provide error bounds using the finite approximate models.

The rest of the paper is organized as follows. In Section \ref{finite_app_sec}, we introduce finite approximate models for MDPs with continuous spaces, and we provide various error bounds for the approximate models under different set of assumptions on the system components. In Section \ref{pomdp_sec}, we introduce an approximate Q-learning algorithm for POMDPs, and we establish the connection between POMDPs and quantized approximate models. In particular, we show that the Q-iterations defined in (\ref{q_alg2}) converges to the optimal Q-function of the finite model introduced in Section \ref{finite_app_sec}. Finally, in Section \ref{num_stud}, we present two case studies.
\section{Near Optimality of Finite Model Approximations}\label{finite_app_sec}

In this section, we introduce finite MDP models, building on \cite{SaLiYuSpringer,SaYuLi15c} with more general conditions (e.g. to allow for non-uniform quantization), for MDPs with continuous state and action spaces. 

\subsection{Convergence Notions for Probability Measures and Regularity Properties of Transition Kernels}
For the analysis of the technical results, we will use different notions of convergence for sequences of probability measures.

Two important notions of convergence for sequences of probability measures are weak convergence and convergence under total variation. For some $N\in\N$, a sequence $\{\mu_n,n\in\N\}$ in $\mathcal{P}(\mathds{X})$ is said to converge to $\mu\in\mathcal{P}(\mathds{X})$ \emph{weakly} if $\int_{\mathds{X}}c(x)\mu_n(dx) \to \int_{\mathds{X}}c(x)\mu(dx)$ for every continuous and bounded $c:\mathds{X} \to \R$.

  For probability measures $\mu,\nu \in \mathcal{P}(\mathds{X})$, the \emph{total variation} metric is given by
  \begin{align*}
    \|\mu-\nu\|_{TV}&=2\sup_{B\in\mathcal{B}(\mathds{X})}|\mu(B)-\nu(B)|=\sup_{f:\|f\|_\infty \leq 1}\left|\int f(x)\mu(\dd x)-\int f(x)\nu(\dd x)\right|,
  \end{align*}
  \noindent where the supremum is taken over all measurable real $f$ such that $\|f\|_\infty=\sup_{x\in\mathds{X}}|f(x)|\leq 1$. A sequence $\mu_n$ is said to converge in total variation to $\mu \in \mathcal{P}(\mathds{X})$ if $\|\mu_n-\mu\|_{TV}\to 0$.

 {Finally, for probability measures $\mu,\nu \in \mathcal{P}(\mathds{X})$ with finite first order moments (that is, $\int \|x\| \, d\nu$ and $\int \|x\| \, d\mu $ are finite)}, the \emph{first order Wasserstein} distance is defined as 
\begin{align*}
W_1(\mu,\nu)=\inf_{\Gamma(\mu,\nu)}E[|X-Y|]=\sup_{f: Lip(f)\leq 1}|\int f(x)\mu(dx)-\int f(x)\nu(dx)|
\end{align*}
where $\Gamma(\mu,\nu)$ denotes the all possible couplings of $X$ and $Y$ with marginals $X\sim\mu$ and $Y\sim\nu$, {and 
\begin{align}
Lip(f) := \sup_{e \neq e'} \frac{f(e) - f(e')}{\|e - e'\|},\nonumber
\end{align}}
 and the second {equality follows from the dual formulation of the Wasserstein distance \cite[Remark 6.5]{Vil09}}. Note that the weak convergence and the Wasserstein convergence are equivalent if the underlying space is compact.

We can now define the following regularity properties for the transition kernels:
\begin{itemize}
\item $\mathcal{T}(\cdot|x,u)$ is said to be weakly continuous in $(x,u)$, if $\mathcal{T}(\cdot|x_n,u_n)\to \mathcal{T}(\cdot|x,u)$ weakly for any $(x_n,u_n)\to (x,u)$.
\item $\mathcal{T}(\cdot|x,u)$ is said to be continuous under total variation in  $(x,u)$, if $\|\mathcal{T}(\cdot|x_n,u_n)- \mathcal{T}(\cdot|x,u)\|_{TV}\to 0$  for any $(x_n,u_n)\to (x,u)$.
\item $\mathcal{T}(\cdot|x,u)$ is said to be continuous under the first order Wasserstein distance in $(x,u)$, if \[W_1(\mathcal{T}(\cdot|x_n,u_n), \mathcal{T}(\cdot|x,u))\to 0\]  for any $(x_n,u_n)\to (x,u)$. {To ensure continuity of $\mathcal{T}$ with respect to the first order Wasserstein distance, in addition to weak continuity, we may assume that there exists a function $g:[0,\infty) \rightarrow [0,\infty)$ such that as $t \to \infty$, $\frac{g(t)}{t} \uparrow \infty$, and 
$$\sup_{(x,u) \in K \times \mathds{U}} \int g(\|y\|) \, \mathcal{T}(dy|x,u) < \infty$$ for any compact $K \subset \mathds{X}$. Note that the latter condition implies uniform integrability of the collection of random variables with probability measures ${\cal T}(dx_1|X_0=x_n,U_0=u_n)$ as $(x_n, u_n) \to (x,u)$, which coupled with weak convergence can be shown to imply convergence under the Wasserstein distance.}
\end{itemize}
\begin{example}\label{examples}
Some example models satisfying these regularity properties are as follows:
\begin{itemize}
\item[(i)] For a model with the dynamics $x_{t+1}=f(x_t,u_t,w_t)$, the induced transition kernel $\mathcal{T}(\cdot|x,u)$ is weakly continuous in $(x,u)$ if $f(x,u,w)$ is a continuous function of $(x,u)$, since for any continuous and bounded function $g$
\begin{align*}
&\int g(x_1)\mathcal{T}(dx_1|x_n,u_n)=\int g(f(x_n,u_n,w))\mu(dw)\\
&\to\int g(f(x,u,w))\mu(dw)=\int g(x_1)\mathcal{T}(dx_1|x,u)
\end{align*}
where $\mu$ denotes the probability measure of the noise process.
{If we also have that $\mathds{X}$ is compact, the transition kernel $\mathcal{T}(\cdot|x,u)$ is also continuous under the first order Wasserstein distance}.

\item[(ii)] For a model with the dynamics $x_{t+1}=f(x_t,u_t)+w_t$, the induced transition kernel $\mathcal{T}(\cdot|x,u)$ is continuous under total variation in $(x,u)$ if $f(x,u)$ is a continuous function of $(x,u)$, and $w_t$ admits a continuous density function. 
\item[(iii)] In general, if the transition kernel admits a continuous density function $f$ so that $\mathcal{T}(dx_1|x,u)=f(x_1,x,u)dx_1$, then $\mathcal{T}(dx_1|x,u)$ is continuous in total variation. This follows from an application of  Scheff\'e's Lemma \cite[Theorem 16.12]{Bil95}. In particular, we can write that
\begin{align*}
\|\mathcal{T}(\cdot|x_n,u_n)-\mathcal{T}(\cdot|x,u)\|_{TV}=\int_{\mathds{X}}|f(x_1,x_n,u_n)-f(x_1,x,u)|dx_1\to 0.
\end{align*}
\item[(iv)] For a model with the dynamics $x_{t+1}=f(x_t,u_t,w_t)$, if $f$ is Lipschitz continuous in $(x,u)$ pair such that, there exists some $\alpha<\infty$ with
\begin{align*}
|f(x_n,u_n,w)-f(x,u,w)|\leq \alpha\left(|x_n-x|+|u_n-u|\right),
\end{align*}
we can then bound the first order Wasserstein distance between the corresponding kernels with $\alpha$:
\begin{align*}
&W_1\left(\mathcal{T}(\cdot|x_n,u_n),\mathcal{T}(\cdot|x,u)\right)=\sup_{Lip(g)\leq 1}\left|\int g(x_1)\mathcal{T}(dx_1|x_n,u_n)-\int g(x_1)\mathcal{T}(dx_1|x,u) \right|\\
&=\sup_{Lip(g)\leq 1}\left|\int g(f(x_n,u_n,w))\mu(dw)-\int g(f(x,u,w))\mu(dw)\right|\\
&\leq \int \left|f(x_n,u_n,w)-f(x,u,w)\right| \mu(dw)\leq \alpha\left(|x_n-x|+|u_n-u|\right).
\end{align*}
\end{itemize}
\end{example}

\subsection{Finite Action Approximate MDP: Quantization of the Action Space and Near Optimality of Finite Action Models}\label{finiteActionMDP}

Let $d_{\mathds{U}}$ denote the metric on $\mathds{U}$. Since the action space $\mathds{U}$ is compact, one can find a sequence of finite sets $\Lambda_n = \{u_{n,1},\ldots,u_{n,k_n}\} \subset \mathds{U}$ such that for all $n$,
$$
\min_{i\in\{1,\ldots,k_n\}} d_{\mathds{U}}(u,u_{n,i}) < 1/n \text{ for all } u \in \mathds{U}.
$$
In other words, $\Lambda_n$ is a $1/n$-net in $\mathds{U}$. For any $f: \mathds{X} \rightarrow \mathds{U}$, let us define the mapping 
\begin{align}
\Upsilon_n(f)(x) := \mathop{\rm arg\, min}_{u\in\Lambda_n} d_{\mathds{U}}(f(x),u) \label{strong:neq6},
\end{align}
where ties are broken so that $\Upsilon_n(f)(x)$ is measurable. The following assumption is imposed on the transition kernel so that the true MDP model can be approximated well by the MDPs with finite action spaces. 

\smallskip

\begin{assumption}
\label{weak:as1}
\begin{itemize}
\item[(i)]The stochastic kernel ${\cal T}(\,\cdot\,|x,u)$ is weakly continuous in $(x,u) \in {\mathds X} \times \mathds{U}$.
\item[(ii)] $c: {\mathds X} \times \mathds{U} \to \mathds{R}_+$ is continuous and bounded.
\end{itemize}
\end{assumption}

\smallskip

For any real-valued continuous and bounded function $v$ on ${\mathds X}$, let $\mathbb{T}$ be given by 
\begin{align}
(\mathbb{T}v) (x) := \min_{u \in \mathbb{U}} \biggl[ c(x,u) + \beta \int_{{\mathds X}} v(y) \, {\cal T}(dy|x,u) \biggr] \label{weak:eq9}.
\end{align}
Here, $\mathbb{T}$ is the \emph{Bellman optimality operator} for the MDP. Analogously, let us define the Bellman optimality operator $\mathbb{T}_n$ of MDP$_n$, which is defined as the  MDP with finite action space $\Lambda_n$, as
\begin{align}
\mathbb{T}_n v(x) := \min_{u\in\Lambda_n} \biggl[ c(x,u) + \beta \int_{{\mathds X}} v(y) \, {\cal T}(dy|x,u) \biggr] \label{weak:eq13}.
\end{align}
Both $\mathbb{T}$ and $\mathbb{T}_n$ are contraction operators on bounded and continuous functions. Furthermore, value functions of MDP and MDP$_n$ are fixed points of these operators; that is, $\mathbb{T}J^*_{\beta}=J^*_{\beta}$ and $\mathbb{T}_nJ_{\beta,n}^*=J_{\beta,n}^*$. Let us define $v^0 = v_n^0 \equiv 0$, and $v^{t+1} = \mathbb{T} v^t$ and $v_n^{t+1} = \mathbb{T}_n v_n^t$ for $t\geq1$; that is, $\{v^t\}_{t\geq1}$ and $\{v_n^t\}_{t\geq1}$ are successive approximations to the discounted value functions of the MDP and MDP$_n$, respectively (via value iteration). The following result is established using the compactness of the action space and weak continuity of the transition kernel. 

\begin{lemma}{\cite[Lemma 3.19]{SaLiYuSpringer}} \label{weak:lemma1}
Under Assumption \ref{weak:as1}, for any compact $K\subset \mathds{X}$ and for any $t\geq1$, we have
\begin{align}
\lim_{n\rightarrow\infty} \sup_{x\in K} |v_n^t(x) - v^t(x)| = 0. \label{weak:neq1}
\end{align}
\end{lemma}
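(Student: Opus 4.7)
I would proceed by induction on $t$. The base case $t=0$ is trivial since $v^0 = v_n^0 \equiv 0$. For the inductive step, I would first establish (again by induction) two auxiliary properties that the statement does not make explicit but that are needed along the way: (a) the iterates are uniformly bounded, $\|v^t\|_\infty, \|v_n^t\|_\infty \le M := \|c\|_\infty/(1-\beta)$; and (b) $v^t$ is continuous on $\mathds{X}$ for every $t$. Property (b) follows inductively because if $v^t$ is bounded and continuous then $(x,u) \mapsto c(x,u) + \beta \int v^t(y)\,\mathcal{T}(dy|x,u)$ is continuous (continuity of $c$ plus weak continuity of the kernel and the definition of weak convergence), and then $v^{t+1}$ is continuous by Berge's maximum theorem since the minimization is over the compact set $\mathds{U}$.

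Now fix a compact $K \subset \mathds{X}$ and $\varepsilon > 0$. Write
\begin{align*}
F(x,u) &:= c(x,u) + \beta \int v^t(y)\,\mathcal{T}(dy|x,u), \\
F_n(x,u) &:= c(x,u) + \beta \int v_n^t(y)\,\mathcal{T}(dy|x,u),
\end{align*}
so that $v^{t+1}(x) = \min_{u\in \mathds{U}} F(x,u)$ and $v_n^{t+1}(x) = \min_{u\in\Lambda_n} F_n(x,u)$. I would split
\[
|v_n^{t+1}(x) - v^{t+1}(x)| \le \bigl|\min_{u\in\Lambda_n} F_n(x,u) - \min_{u\in\Lambda_n} F(x,u)\bigr| + \bigl|\min_{u\in\Lambda_n} F(x,u) - \min_{u\in\mathds{U}} F(x,u)\bigr|
\]
and bound the two terms separately, both uniformly over $x \in K$.

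For the second term, $F$ is continuous on $\mathds{X} \times \mathds{U}$ by (b), so it is uniformly continuous on $K \times \mathds{U}$. Since $\Lambda_n$ is a $1/n$-net of $\mathds{U}$, for large enough $n$ the modulus of continuity of $F$ restricted to $K \times \mathds{U}$ at scale $1/n$ is below $\varepsilon/2$, which yields $\sup_{x\in K} |\min_{\Lambda_n} F(x,\cdot) - \min_{\mathds{U}} F(x,\cdot)| < \varepsilon/2$. The main obstacle is the first term, because the induction hypothesis only gives $v_n^t \to v^t$ uniformly on compacts, whereas the integral $\int (v_n^t - v^t)(y)\,\mathcal{T}(dy|x,u)$ runs over all of $\mathds{X}$. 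Here I would invoke tightness: by weak continuity of $\mathcal{T}(\cdot|x,u)$ and compactness of $K \times \mathds{U}$, the image $\{\mathcal{T}(\cdot|x,u): (x,u)\in K\times\mathds{U}\}$ is weakly compact in $\mathcal{P}(\mathds{X})$, hence tight by Prokhorov's theorem (using that $\mathds{X}$ is Polish). Pick a compact $K' \subset \mathds{X}$ with $\sup_{(x,u)\in K\times\mathds{U}} \mathcal{T}(\mathds{X} \setminus K' \mid x,u) < \varepsilon/(8\beta M)$. Then
\[
\left|\int (v_n^t - v^t)(y)\,\mathcal{T}(dy|x,u)\right| \le \sup_{y\in K'} |v_n^t(y) - v^t(y)| + 2M \cdot \mathcal{T}(\mathds{X}\setminus K' \mid x,u),
\]
and the induction hypothesis applied to $K'$ makes the first summand smaller than $\varepsilon/(4\beta)$ for all $n$ large. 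Multiplying by $\beta$ bounds the first term by $\varepsilon/2$ uniformly in $x \in K$ and $u \in \Lambda_n$.

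Combining the two bounds gives $\sup_{x\in K} |v_n^{t+1}(x) - v^{t+1}(x)| \le \varepsilon$ for all sufficiently large $n$, completing the induction. The crux of the argument is the tightness step for the first term; once one accepts that weak continuity of $\mathcal{T}$ over $K \times \mathds{U}$ entails uniform tightness, the rest is a routine $\varepsilon/2 + \varepsilon/2$ split using uniform continuity and the net property of $\Lambda_n$.
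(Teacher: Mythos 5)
Your proposal is correct. Note first that the paper does not prove this lemma itself; it is imported verbatim from the cited reference (Lemma 3.19 of the Saldi--Linder--Y\"uksel monograph), so the comparison is with that source's argument. Your proof has the same skeleton as the reference: induction on $t$, uniform boundedness of the iterates by $\|c\|_\infty/(1-\beta)$, continuity of $v^t$ via weak continuity of $\mathcal{T}$ plus compactness of $\mathds{U}$, and a split of the error into the $1/n$-net (action-discretization) term, handled by uniform continuity on $K\times\mathds{U}$, and the propagated term $\beta\int (v_n^t-v^t)\,d\mathcal{T}$. The one place where you genuinely diverge is in how that propagated term is controlled: the cited proof argues sequentially, taking $x_n\to x$ in $K$ (and $u_n \to u$) and invoking a generalized dominated convergence result for uniformly bounded functions converging uniformly on compacta against weakly convergent measures, whereas you make the mechanism explicit by noting that $\{\mathcal{T}(\cdot|x,u):(x,u)\in K\times\mathds{U}\}$ is the continuous image of a compact set in $\mathcal{P}(\mathds{X})$, hence tight by the converse direction of Prokhorov's theorem, and then doing a direct $\varepsilon$-split over $K'$ and its complement. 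The two routes are equivalent in substance; yours is more quantitative and makes the uniformity over $x\in K$ transparent, while the sequential route avoids invoking Prokhorov (your tightness step does use that $\mathds{X}$ is Polish, which is consistent with the paper's standing assumptions). Your use of the induction hypothesis on the auxiliary compact set $K'$ is legitimate precisely because the hypothesis is quantified over all compact subsets, and your constants check out.
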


\noindent The following theorem states that the optimal value function of MDP$_n$ converges to the optimal value function of the original MDP. It can be proved by using Lemma~\ref{weak:lemma1} and taking into account that $\{v^t\}_{t\geq1}$ and $\{v_n^t\}_{t\geq1}$ are successive approximations to the value functions $J^*_{\beta}$ and $J_{\beta,n}^*$, respectively.

\begin{theorem}{\cite[Theorem 3.16]{SaLiYuSpringer}} \label{weak:thm2} 
Under Assumption \ref{weak:as1}, for any compact $K\subset \mathds{X}$, we have
\begin{align}
\lim_{n\rightarrow\infty} \sup_{x\in K} |J_{\beta,n}^*(x) - J_\beta^*(x)| &= 0. \label{weak:eq16}
\end{align}
\end{theorem}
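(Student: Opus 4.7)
The plan is to combine the uniform-on-compacts convergence of the value iterates in Lemma \ref{weak:lemma1} with the contraction property of the Bellman operators to transfer this convergence to the fixed points themselves. The main idea is a standard ``sandwich'' argument through the $t$-th value iterates.

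First, I would observe that since $c$ is bounded, say by $\|c\|_\infty$, both $\mathbb{T}$ and $\mathbb{T}_n$ are $\beta$-contractions on the Banach space of bounded measurable functions under the sup norm, and both map this space into itself. Hence $J^*_\beta$ and $J^*_{\beta,n}$ are the unique bounded fixed points, with $\|J^*_\beta\|_\infty, \|J^*_{\beta,n}\|_\infty \le \|c\|_\infty/(1-\beta)$. Starting value iteration from $v^0 = v_n^0 \equiv 0$, the contraction estimate gives
\begin{align*}
\|v^t - J^*_\beta\|_\infty \;\le\; \beta^t \|J^*_\beta\|_\infty \;\le\; \frac{\beta^t \|c\|_\infty}{1-\beta},
\end{align*}
and likewise $\|v_n^t - J^*_{\beta,n}\|_\infty \le \beta^t \|c\|_\infty/(1-\beta)$, \emph{uniformly in} $n$.

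Fix $\varepsilon > 0$ and a compact set $K \subset \mathds{X}$. Using the uniform bound above, choose $t_0$ large enough that $2\beta^{t_0}\|c\|_\infty/(1-\beta) < \varepsilon/2$. Then for every $x \in K$ and every $n$, the triangle inequality yields
\begin{align*}
|J^*_{\beta,n}(x) - J^*_\beta(x)| \;\le\; |J^*_{\beta,n}(x) - v_n^{t_0}(x)| + |v_n^{t_0}(x) - v^{t_0}(x)| + |v^{t_0}(x) - J^*_\beta(x)|.
\end{align*}
Taking the supremum over $x \in K$, the first and third terms are each bounded by $\beta^{t_0}\|c\|_\infty/(1-\beta)$, summing to less than $\varepsilon/2$. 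For the middle term, I would apply Lemma \ref{weak:lemma1} at the fixed iteration depth $t_0$: there exists $N$ such that for all $n \ge N$, $\sup_{x \in K} |v_n^{t_0}(x) - v^{t_0}(x)| < \varepsilon/2$. Combining these bounds gives $\sup_{x \in K} |J^*_{\beta,n}(x) - J^*_\beta(x)| < \varepsilon$ for all $n \ge N$, which is exactly the claimed convergence.

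The only subtle point, rather than a real obstacle, is making sure the contraction estimate is genuinely uniform in $n$ so that the $t_0$ chosen to control the first and third terms does not depend on $n$; this is immediate once one notes that $\|c\|_\infty$ is the same for all MDP$_n$ since the cost function is unchanged and only the action set is restricted. The rest is bookkeeping around Lemma \ref{weak:lemma1}, which already does the heavy lifting by exploiting weak continuity of $\mathcal{T}$ together with continuity of $c$ on the compact sets $K \times \mathds{U}$.
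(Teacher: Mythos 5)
Your argument is correct and is essentially the paper's own route: the paper explicitly indicates that Theorem \ref{weak:thm2} follows from Lemma \ref{weak:lemma1} together with the fact that $\{v^t\}$ and $\{v_n^t\}$ are successive (value-iteration) approximations of $J^*_\beta$ and $J^*_{\beta,n}$, which is precisely your sandwich argument with the $n$-uniform contraction tail bound $\beta^{t}\|c\|_\infty/(1-\beta)$. Nothing further is needed.
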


Since any MDP with weakly continuous transition probability can be approximated by MDPs with finite action spaces by Theorem~\ref{weak:thm2}, in the sequel, we assume that $\mathds{U}$ is finite.

\subsection{Finite State Approximate MDP: Quantization of the State Space}\label{finitestate}

We now establish near optimality under finite state approximations. We start by choosing a collection of disjoint sets $\{B_i\}_{i=1}^M$ such that $\bigcup_i B_i=\mathds{X}$, and $B_i\bigcap B_j =\emptyset$ for any $i\neq j$. Furthermore, we choose a representative state, $y_i\in B_i$, for each disjoint set. For this setting, we denote the new finite state space by
$\mathds{Y}:=\{y_1,\dots,y_M\}$, and the mapping from the original state space $\mathds{X}$ to the finite set $\mathds{Y}$ is done via
\begin{align}\label{quant_map}
q(x)=y_i \quad \text{ if } x\in B_i.
\end{align}
Furthermore, we choose a weighting measure $\pi^*\in\P(\mathds{X})$ on $\mathds{X}$ such that $\pi^*(B_i)>0$ for all $B_i$. We now define normalized measures using the weight measure on each separate quantization bin $B_i$ as follows:  
\begin{align}\label{norm_inv}
\hat{\pi}_{y_i}^*(A):=\frac{\pi^*(A)}{\pi^*(B_i)}, \quad \forall A\subset B_i, \quad \forall i\in \{1,\dots,M\},
\end{align}
that is, $\hat{\pi}_{y_i}^*$ is the normalized weight measure on the set $B_i$, where $y_i$ belongs to. 

We now define the stage-wise cost and transition kernel for the MDP with this finite state space $\mathds{Y}$ using the normalized weight measures. Indeed, for any $y_i,y_j\in \mathds{Y}$ and $u \in \mathds{U}$, the stage-wise cost and the transition kernel for the finite-state model are defined as 
\begin{align}\label{finite_cost}
C^*(y_i,u) &= \int_{B_i}c(x,u) \, \hat{\pi}_{y_i}^*(dx),\nonumber\\
P^*(y_j|y_i,u) &= \int_{B_i}\mathcal{T}(B_j|x,u)\, \hat{\pi}_{y_i}^*(dx).
\end{align}
Having defined the finite state space $\mathds{Y}$, the cost function $C^*$ and the transition kernel $P^*$, we can now introduce the optimal value function for this finite model. We denote the optimal value function which is defined on $\mathds{Y}$ by $\hat{J}_\beta:\mathds{Y}\to \mathds{R}$. Note that $\hat{J}_\beta$ satisfies the following DCOE for any $y\in\mathds{Y}$
\begin{align*}
\hat{J}_\beta(y)=\inf_{u\in\mathds{U}}\left\{C^*(y,u)+\beta\sum_{z\in\mathds{Y}}\hat{J}_\beta(z)P^*(z|y,u)\right\}.
\end{align*}
Note that we can easily extend this function over the original state space $\mathds{X}$ by making it constant over the quantization bins. In other words, if $y\in B_i$, then for any $x\in B_i$, we write
\begin{align*}
\hat{J}_\beta(x):=\hat{J}_\beta(y).
\end{align*}
We further define an average loss function $L:\mathds{X}\to\mathds{R}$ as a result of the quantization. For some $x\in\mathds{X}$, where $x$ belongs to a quantization bin $B_i$ whose representative state is $y_i$ (i.e. $q(x)=y_i$), the average loss function $L(x)$ is defined as
\begin{align}\label{loss_func}
L(x):=\int_{B_i}\|x-x'\| \, \hat{\pi}_{y_i}^*(dx') \qquad \forall x \in B_i, i=1,\cdots,M.
\end{align}
That is, $L(x)$ can be seen as the distance of the state $x$ to the mean of the bin $B_i$ under the measure $\hat{\pi}^*_{y_i}$.

In the following, we present error analyses in finite state approximations defined in this section.

\subsubsection{Finite State Approximations with Kernels Continuous in Total Variation under Expected Quantization Error Bounds}
In this section, we focus on an MDP model whose {transition kernel is Lipschitz continuous in $x$ (uniform in $u$) under the total variation norm. This condition is somewhat different than the continuity of the transition kernel under the total variation distance. Indeed, if we have a model as in Example~\ref{examples}-(ii), then we have the required Lipschitz continuity of the transition kernel when $f$ is Lipschitz continuous in $x$ that is uniform in $u$ and the density of the noise $w$ is Lipschitz continuous.} The following assumptions are imposed on the system. 

\smallskip

\begin{assumption}\label{tv_assmpt}
\begin{itemize}
\item[(a)] There exists a constant $\alpha_c>0$ such that $|c(x,u)-c(x',u)|\leq \alpha_c \|x-x'\|$ for all $x,x'\in\mathds{X}$ and for all $u\in\mathds{U}$.
\item[(b)] There exists a constant $\alpha_T>0$ such that $\|\mathcal{T}(\cdot|x,u)-\mathcal{T}(\cdot|x',u)\|_{TV}\leq \alpha_T\|x-x'\|$ for all $x,x'\in\mathds{X}$ and for all $u\in\mathds{U}$.
\end{itemize}
\end{assumption}

{Note that the cost function $c$ is Lipschitz continuous in $x$ (uniform in $u$), if the partial derivative of $c$ with respect to $x$ is uniformly bounded.}

\smallskip

The first result gives an error bound for the approximate value function.
\begin{theorem}\label{tv_thm}
Under Assumption \ref{tv_assmpt}, provided that $c$ is bounded, we have for any initial state $x_0\in\mathds{X}$
\begin{align*}
\left|\hat{J}_\beta(x_0)-J^*_\beta(x_0)\right|\leq \left(\alpha_c+\frac{\beta\alpha_T\|c\|_\infty}{1-\beta}\right) \sum_{t=0}^\infty \beta^t \sup_{\gamma\in\Gamma} E^{\cal{T},\gamma}_{x_0}\left[L(X_t)\right],
\end{align*}
where $L$ is defined in (\ref{loss_func}).
\end{theorem}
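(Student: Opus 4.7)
The plan is to compare $\hat{J}_\beta$ (extended constantly over each quantization bin $B_i$) with $J_\beta^*$ pointwise on $\mathds{X}$ via their Bellman equations. Writing $i(x)$ for the index of the bin containing $x$, the definitions in (\ref{finite_cost}) together with the extension convention let us rewrite the DCOE for $\hat{J}_\beta$ on all of $\mathds{X}$ as
$$\hat{J}_\beta(x) \;=\; \inf_{u \in \mathds{U}} \left\{ \tilde{c}(x,u) + \beta \int \hat{J}_\beta(y)\, \tilde{\mathcal{T}}(dy|x,u) \right\},$$
where $\tilde{c}(x,u) := \int_{B_{i(x)}} c(x',u)\, \hat{\pi}_{y_{i(x)}}^*(dx')$ and $\tilde{\mathcal{T}}(\cdot|x,u) := \int_{B_{i(x)}} \mathcal{T}(\cdot|x',u)\, \hat{\pi}_{y_{i(x)}}^*(dx')$; in other words, $\hat{J}_\beta$ is the value function of an auxiliary MDP whose cost and kernel mix the originals over each bin. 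Boundedness of $c$ gives $\|J_\beta^*\|_\infty,\|\hat{J}_\beta\|_\infty \leq \|c\|_\infty/(1-\beta)$.

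I would then apply $|\inf_u f_u - \inf_u g_u| \leq \sup_u |f_u - g_u|$ to the two Bellman equations and split the integral difference as
$$\int J_\beta^*\, d\mathcal{T}(\cdot|x,u) - \int \hat{J}_\beta\, d\tilde{\mathcal{T}}(\cdot|x,u) = \int (J_\beta^* - \hat{J}_\beta)\, d\mathcal{T}(\cdot|x,u) + \int \hat{J}_\beta\, d(\mathcal{T} - \tilde{\mathcal{T}})(\cdot|x,u).$$
Passing absolute values inside the integral over $\hat{\pi}_{y_{i(x)}}^*$ and invoking Assumption~\ref{tv_assmpt} gives the pointwise estimates $|\tilde{c}(x,u)-c(x,u)| \leq \alpha_c L(x)$ and $\|\mathcal{T}(\cdot|x,u) - \tilde{\mathcal{T}}(\cdot|x,u)\|_{TV} \leq \alpha_T L(x)$. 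Using $\|\hat{J}_\beta\|_\infty \leq \|c\|_\infty/(1-\beta)$ to handle the second summand, and setting $e(x) := |\hat{J}_\beta(x) - J_\beta^*(x)|$ and $K := \alpha_c + \beta\alpha_T \|c\|_\infty/(1-\beta)$, the comparison produces the key recursion
$$e(x) \;\leq\; K\, L(x) + \beta \sup_{u \in \mathds{U}} \int e(y)\, \mathcal{T}(dy|x,u).$$
The essential feature of this decomposition is that the residual expectation is taken against the \emph{true} kernel $\mathcal{T}$, not the mixture kernel $\tilde{\mathcal{T}}$.

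Finally, let $(Th)(x) := \sup_{u \in \mathds{U}} \int h(y)\, \mathcal{T}(dy|x,u)$. The operator $T$ is monotone and subadditive, so iterating the recursion gives $e(x_0) \leq K \sum_{t=0}^{n}\beta^t (T^tL)(x_0) + \beta^{n+1}\|e\|_\infty$; since $e$ is bounded, the remainder vanishes as $n\to\infty$. A standard measurable-selection (or $\varepsilon$-optimization) argument identifies $(T^tL)(x_0)$ with $\sup_{\gamma \in \Gamma} E_{x_0}^{\mathcal{T},\gamma}[L(X_t)]$, yielding exactly the stated bound.

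The step that requires the most care is the splitting above. The opposite decomposition---adding and subtracting $\int J_\beta^* \, d\tilde{\mathcal{T}}(\cdot|x,u)$ instead---would leave a residual of the form $\int e(y)\, \tilde{\mathcal{T}}(dy|x,u)$; upon iteration this produces expectations under the mixture kernel $\tilde{\mathcal{T}}$ rather than $\mathcal{T}$, and these cannot in general be reinterpreted as $E^{\mathcal{T},\gamma}_{x_0}[L(X_t)]$ for any admissible $\gamma \in \Gamma$. Once the correct splitting is in place, the rest is bookkeeping using Assumption~\ref{tv_assmpt} and the boundedness of the value functions.
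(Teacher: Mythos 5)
Your proposal is correct and follows essentially the same route as the paper's proof: the same comparison of the two Bellman equations, the same splitting that keeps the residual term $\int(J^*_\beta-\hat{J}_\beta)\,d\mathcal{T}(\cdot|x,u)$ against the true kernel, the same bounds $\alpha_c L(x)$ and $\alpha_T L(x)\|\hat{J}_\beta\|_\infty$ from Assumption~\ref{tv_assmpt}, and the same iteration with $\|\hat{J}_\beta\|_\infty\leq\|c\|_\infty/(1-\beta)$. Your operator $T$ and the identification $(T^tL)(x_0)=\sup_{\gamma\in\Gamma}E^{\mathcal{T},\gamma}_{x_0}[L(X_t)]$ is just a cleaner packaging of the paper's repeated $\sup_{\gamma}E^{\gamma}_{x_0}[\cdot]$ steps, not a different argument.
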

\begin{proof}
The proof can be found in Appendix \ref{tv_thm_proof}.
\end{proof}

The following result provides an error bound for the approximate policy of the finite-state model when it is applied to the original model.

\begin{theorem}\label{tv_thm_robust}
Under Assumption \ref{tv_assmpt}, we have for any initial state $x_0\in\mathds{X}$
\begin{align*}
\left|J_\beta(x_0,\hat{\gamma})-J^*_\beta(x_0)\right|\leq 2\left(\alpha_c+\frac{\beta\alpha_T\|c\|_\infty}{1-\beta}\right) \sum_{t=0}^\infty \beta^t \sup_{\gamma\in\Gamma} E^{\cal{T},\gamma}_{x_0}\left[L(X_t)\right]
\end{align*}
where $L$ is defined in (\ref{loss_func}) and $\hat{\gamma}$ denotes the optimal policy of the finite-state approximate model given by (\ref{finite_cost}) extended to the state space $\mathds{X}$ via the quantization function $q$.
\end{theorem}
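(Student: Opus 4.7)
The plan is to decompose the performance loss via the triangle inequality
\begin{align*}
|J_\beta(x_0,\hat{\gamma}) - J^*_\beta(x_0)| \leq |J_\beta(x_0,\hat{\gamma}) - \hat{J}_\beta(x_0)| + |\hat{J}_\beta(x_0) - J^*_\beta(x_0)|,
\end{align*}
where the second term is immediately controlled by Theorem \ref{tv_thm}. The remaining work is to bound the first term, namely the gap between the performance of the approximate policy evaluated in the true model and the value function of the approximate model itself; by symmetry this first term will satisfy the same estimate as Theorem \ref{tv_thm}, which explains the factor of $2$ in the stated bound.

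To bound $|J_\beta(x_0,\hat{\gamma}) - \hat{J}_\beta(x_0)|$, I would introduce the extension $\tilde{J}_\beta(x) := \hat{J}_\beta(q(x))$ on $\mathds{X}$ and analyze how close it is to being a fixed point of the true-model Bellman operator for the fixed (extended) policy $\hat{\gamma}$,
\begin{align*}
T^{\hat{\gamma}} f(x) := c(x,\hat{\gamma}(x)) + \beta \int f(y) \, \mathcal{T}(dy|x,\hat{\gamma}(x)).
\end{align*}
Because $\hat{\gamma}$ is constant on each quantization bin $B_i$ and $\tilde{J}_\beta$ depends only on $q(\cdot)$, the DCOE for the finite model together with the definitions (\ref{finite_cost}) of $C^*$ and $P^*$ rewrites as
\begin{align*}
\tilde{J}_\beta(x) = \int_{B_{q(x)}} \biggl[ c(x',\hat{\gamma}(x)) + \beta \int \tilde{J}_\beta(y) \, \mathcal{T}(dy|x',\hat{\gamma}(x)) \biggr] \hat{\pi}^*_{q(x)}(dx').
\end{align*}
Thus $T^{\hat{\gamma}}\tilde{J}_\beta(x) - \tilde{J}_\beta(x)$ is the difference between the integrand evaluated at $x$ and its $\hat{\pi}^*_{q(x)}$-average over $B_{q(x)}$. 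Applying Assumption \ref{tv_assmpt}(a) to the cost term and Assumption \ref{tv_assmpt}(b) together with the uniform bound $\|\tilde{J}_\beta\|_\infty \leq \|c\|_\infty/(1-\beta)$ to the kernel term yields the pointwise estimate
\begin{align*}
|T^{\hat{\gamma}}\tilde{J}_\beta(x) - \tilde{J}_\beta(x)| \leq \left(\alpha_c + \frac{\beta \alpha_T \|c\|_\infty}{1-\beta}\right) L(x).
\end{align*}

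To convert this one-step Bellman residual into a bound on $|J_\beta(x_0,\hat\gamma)-\tilde{J}_\beta(x_0)|$, I would iterate the contraction $T^{\hat\gamma}$: since $J_\beta(\cdot,\hat\gamma)$ is the unique fixed point of $T^{\hat\gamma}$, the standard telescoping identity gives
\begin{align*}
J_\beta(x_0,\hat{\gamma}) - \tilde{J}_\beta(x_0) = \sum_{t=0}^{\infty} \beta^t \, E^{\mathcal{T},\hat{\gamma}}_{x_0}\!\left[T^{\hat{\gamma}}\tilde{J}_\beta(X_t) - \tilde{J}_\beta(X_t)\right].
\end{align*}
Taking absolute values, using the pointwise residual estimate, and bounding the expectation under $\hat\gamma$ by the supremum over $\gamma \in \Gamma$ yields the desired bound on the first triangle-inequality term. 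Combining with Theorem \ref{tv_thm} produces the stated factor-of-$2$ result.

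The main obstacle will be a careful verification of the averaged Bellman identity for $\tilde{J}_\beta$ --- in particular, exploiting that $\hat\gamma(x')=\hat\gamma(x)$ for all $x'\in B_{q(x)}$ and that $\tilde{J}_\beta(y)$ factors through $q(y)$ so that $\int \tilde J_\beta(y)\,\mathcal{T}(dy|x',\hat\gamma(x'))$ matches $\sum_{z\in\mathds{Y}}\hat J_\beta(z)P^*(z\mid q(x),\hat\gamma(x))$ after averaging against $\hat\pi^*_{q(x)}$ --- and controlling the kernel difference uniformly in $u$, where the total-variation Lipschitz assumption is essential since $\tilde J_\beta$ is only bounded (not Lipschitz) and the usual weak-convergence bounds would be insufficient.
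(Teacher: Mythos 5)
Your proposal is correct, and it uses the same top-level decomposition as the paper: the triangle inequality through $\hat{J}_\beta$, with the second term handled by Theorem \ref{tv_thm}, and the same key estimates for the first term (bin-averaged cost differences via $\alpha_c$, and the kernel difference paired with the bound $\|\hat{J}_\beta\|_\infty \le \|c\|_\infty/(1-\beta)$ via the total-variation Lipschitz condition). Where you differ is in the mechanism for converting these one-step estimates into a bound on $|J_\beta(x_0,\hat{\gamma})-\hat{J}_\beta(x_0)|$: the paper runs finite-horizon value iterations $v_k$ and $\hat{v}_k$ for the true and approximate models under $\hat{\gamma}$, proves an explicit bound on $|v_k-\hat{v}_k|$ by induction on $k$, and then passes to the limit using the uniform convergence of value iteration to the respective fixed points; you instead work directly with the fixed points, showing that the constant extension $\tilde{J}_\beta = \hat{J}_\beta \circ q$ has one-step Bellman residual at most $\bigl(\alpha_c+\tfrac{\beta\alpha_T\|c\|_\infty}{1-\beta}\bigr)L(x)$ for the policy-evaluation operator $T^{\hat{\gamma}}$, and then invoking the telescoping (performance-difference) identity, whose remainder term vanishes by boundedness of both functions. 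Your route avoids the induction and the auxiliary uniform bounds (\ref{value_bounds}), and is in fact marginally sharper in form, since the discounted sum of residuals only involves $E^{\mathcal{T},\hat{\gamma}}_{x_0}[L(X_t)]$ under the single policy $\hat{\gamma}$ before you relax it to $\sup_{\gamma\in\Gamma}$ to match the statement; the paper's inductive argument, on the other hand, keeps everything at the level of finite-horizon quantities and so never needs the fixed-point identity or the vanishing-remainder step. Both arguments rest on the same averaged DCOE representation of $\hat{J}_\beta$ and on $\hat{\gamma}$ being constant on the quantization bins, which you correctly flag as the point requiring care.
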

\begin{proof}
The proof can be found in Appendix \ref{tv_thm_robust_proof}.
\end{proof}


\subsubsection{Finite State Approximation with Kernels Continuous in Wasserstein Distance under Uniform Quantization Error Bounds}

In this section, we focus on the models with transition kernels that are Lipschitz continuous in $x$ (uniform in $u$) under the first order Wasserstein distance. {If we have a model as in Example~\ref{examples}-(i), then we have the required Lipschitz continuity of the transition kernel when $f$ is Lipschitz continuous in $x$ that is uniform in $u$.} Here, instead of providing an average loss bound using (\ref{loss_func}) as in Theorem \ref{tv_thm}, we will provide a uniform loss bound result, and we also assume the state space to be compact. We first define
\begin{align}\label{unif_loss}
\bar{L}:=\max_{i=1,\dots,M}\sup_{x,x'\in B_i}\|x-x'\|.
\end{align}
Here, $\bar{L}$ is the largest diameter among the quantization bins. The following assumptions are imposed on the components of the model.

\smallskip

\begin{assumption}\label{wass_assmpt}
\begin{itemize}
\item[(a)] $\mathds{X}$ is compact. 
\item[(b)] There exists a constant $\alpha_c>0$ such that $|c(x,u)-c(x',u)|\leq \alpha_c \|x-x'\|$ for all $x,x'\in\mathds{X}$ and for all $u\in\mathds{U}$.
\item[(c)] There exists a constant $\alpha_T>0$ such that $W_1(\mathcal{T}(\cdot|x,u),\mathcal{T}(\cdot|x',u))\leq \alpha_T\|x-x'\|$ for all $x,x'\in\mathds{X}$ and for all $u\in\mathds{U}$.
\end{itemize}
\end{assumption}

\smallskip

Note that Assumption~\ref{wass_assmpt}-(a) ensures that the quantity $\bar{L}$ is finite for each $M$ and converges to $0$ as $M \rightarrow \infty$. Without this assumption, $\bar{L} = \infty$ for any $M$. We now present the main results of this section. The first theorem states that the optimal value function of the finite-state model converges to the optimal value function of the original model as $\bar{L} \rightarrow 0$. 

\begin{theorem}\label{wass_thm}
Under Assumption \ref{wass_assmpt}, we have 
\begin{align*}
\sup_{x_0\in\mathds{X}}\left|\hat{J}_\beta(x_0)-J^*_\beta(x_0)\right|\leq \frac{\alpha_c}{(1-\beta\alpha_T)(1-\beta)}\bar{L}
\end{align*}
where $\bar{L}$ is defined in (\ref{unif_loss}).
\end{theorem}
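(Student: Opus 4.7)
The plan is to mirror the total variation argument in Theorem \ref{tv_thm}, but exploit the Wasserstein Lipschitz continuity of $\mathcal{T}$ together with the Lipschitz regularity of the optimal value function $J^*_\beta$, which is a consequence of Assumption \ref{wass_assmpt} rather than an extra hypothesis.

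The first step is to establish that $J^*_\beta$ is Lipschitz with constant $K^* := \alpha_c/(1-\beta\alpha_T)$. I would do this by showing that the Bellman operator $\mathbb{T}$ maps the set of Lipschitz functions with constant $K$ into the set of Lipschitz functions with constant $\alpha_c + \beta \alpha_T K$: for Lipschitz $v$ with $Lip(v) \leq K$, the Wasserstein duality gives $|\int v \, d\mathcal{T}(\cdot|x,u) - \int v \, d\mathcal{T}(\cdot|x',u)| \leq K\, W_1(\mathcal{T}(\cdot|x,u),\mathcal{T}(\cdot|x',u)) \leq K\alpha_T\|x-x'\|$, while $|c(x,u)-c(x',u)| \leq \alpha_c\|x-x'\|$, and these combine using $|\min_u f - \min_u g| \leq \sup_u |f - g|$. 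Since $\beta\alpha_T < 1$ may or may not hold — I would take this as part of the reading of the statement, as the bound blows up otherwise — iterating yields a fixed Lipschitz constant $K^* = \alpha_c/(1-\beta\alpha_T)$ for $J^*_\beta$.

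Next, I would extend $\hat{J}_\beta$ to $\mathds{X}$ as a piecewise constant function and compare the two Bellman fixed point equations pointwise. Fix $x \in B_i$ so that $q(x) = y_i$. Using $|\min - \min| \leq \sup_u |\cdot|$, the difference $|\hat{J}_\beta(x) - J^*_\beta(x)|$ is bounded uniformly in $u$ by
\begin{align*}
\Bigl|\int_{B_i}\! c(x',u)\,\hat{\pi}^*_{y_i}(dx') - c(x,u)\Bigr| + \beta\Bigl|\int_{B_i}\!\!\int \hat{J}_\beta(y)\,\mathcal{T}(dy|x',u)\,\hat{\pi}^*_{y_i}(dx') - \int J^*_\beta(y)\,\mathcal{T}(dy|x,u)\Bigr|.
\end{align*}
The first term is at most $\alpha_c \bar{L}$ by Assumption \ref{wass_assmpt}(b) and the definition of $\bar{L}$. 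For the second, I would add and subtract $\int J^*_\beta(y)\,\mathcal{T}(dy|x',u)$ under the $\hat{\pi}^*_{y_i}$ average, splitting it into a value-function error term bounded by $\|\hat{J}_\beta - J^*_\beta\|_\infty$ and a kernel-comparison term bounded by $K^* \, W_1(\mathcal{T}(\cdot|x',u),\mathcal{T}(\cdot|x,u)) \leq K^* \alpha_T \|x'-x\| \leq K^* \alpha_T \bar{L}$ via the Lipschitz property of $J^*_\beta$ and the Wasserstein dual characterization.

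Combining and taking $\sup_{x \in \mathds{X}}$ yields the self-referential bound
\begin{align*}
\|\hat{J}_\beta - J^*_\beta\|_\infty \leq \alpha_c \bar{L} + \beta K^* \alpha_T \bar{L} + \beta \|\hat{J}_\beta - J^*_\beta\|_\infty,
\end{align*}
which, after substituting $K^* = \alpha_c/(1-\beta\alpha_T)$ and simplifying $\alpha_c + \beta \alpha_T K^* = \alpha_c/(1-\beta\alpha_T)$, gives exactly the claimed bound $\bar{L}\alpha_c/[(1-\beta\alpha_T)(1-\beta)]$. The main obstacle I anticipate is the Lipschitz regularity of $J^*_\beta$: it requires a clean fixed point argument within the Banach space of Lipschitz functions (relying on compactness of $\mathds{X}$ to make bounded Lipschitz a complete normed subspace), and it is here that the factor $1/(1-\beta\alpha_T)$ enters the final bound. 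Everything else is a direct Bellman-equation comparison familiar from the total variation case.
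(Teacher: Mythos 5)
Your proposal is correct and follows essentially the same route as the paper's proof: compare the two fixed-point equations, bound the cost term by $\alpha_c\bar{L}$, add and subtract $\int J^*_\beta \,\mathcal{T}(dy|x',u)$ to split the kernel term into a sup-norm error plus a Wasserstein term controlled by the Lipschitz constant of $J^*_\beta$, and solve the resulting self-referential inequality. The only difference is that you prove the bound $\mathrm{Lip}(J^*_\beta)\leq \alpha_c/(1-\beta\alpha_T)$ via the Bellman-operator argument, whereas the paper simply cites it (Theorem 4.37 of Saldi--Linder--Y\"uksel); both implicitly require $\beta\alpha_T<1$, as you note.
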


\begin{proof}
The proof can be found in Appendix \ref{wass_thm_proof}.
\end{proof}

The following result is very similar to \cite[Theorem 4.38]{SaLiYuSpringer} with a slightly better bound. It can be established in a straightforward way using Theorem~\ref{wass_thm}. 

\begin{theorem}\label{wass_thm_robust}
Under Assumption \ref{wass_assmpt}, we have 
\begin{align*}
\sup_{x_0\in\mathds{X}}\left|J_\beta(x_0,\hat{\gamma})-J^*_\beta(x_0)\right|\leq  \frac{2\alpha_c}{(1-\beta)^2(1-\beta\alpha_T)}\bar{L}.
\end{align*}
where $\bar{L}$ is defined in (\ref{unif_loss}) and $\hat{\gamma}$ denotes the optimal policy of the finite-state approximate model extended to the state space $\mathds{X}$ via the quantization function $q$.
\end{theorem}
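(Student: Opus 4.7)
I would combine the bound from Theorem~\ref{wass_thm} with a greedy-policy performance-difference argument. Define the true optimal $Q$-function $Q^*(x,u) := c(x,u) + \beta\int J^*_\beta(x_1)\,\mathcal{T}(dx_1|x,u)$ on $\mathds{X}\times\mathds{U}$ and its finite-model analogue $\hat{Q}^*(y,u) := C^*(y,u) + \beta\sum_{z}\hat{J}_\beta(z) P^*(z|y,u)$ on $\mathds{Y}\times\mathds{U}$, so that $\hat{\gamma}(q(x))\in \argmin_u \hat{Q}^*(q(x),u)$. The whole argument then reduces to a single uniform estimate on $|Q^*(x,u)-\hat{Q}^*(q(x),u)|$: via the performance-difference identity
\[
J_\beta(x_0,\hat{\gamma}) - J^*_\beta(x_0) = E_{x_0}^{\mathcal{T},\hat{\gamma}}\Bigl[\sum_{t=0}^\infty \beta^t\bigl(Q^*(X_t,\hat{\gamma}(X_t)) - J^*_\beta(X_t)\bigr)\Bigr]
\]
and the pointwise greedy-policy bound $Q^*(x,\hat{\gamma}(x)) - J^*_\beta(x)\le 2\sup_{x',u}|Q^*(x',u)-\hat{Q}^*(q(x'),u)|$ (which follows from $\hat{\gamma}$ being a minimizer of $\hat{Q}^*(q(x),\cdot)$), summing the geometric series $\sum_t\beta^t=1/(1-\beta)$ delivers the claim.

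First I would observe that $J^*_\beta$ is Lipschitz with $\mathrm{Lip}(J^*_\beta)\le \alpha_c/(1-\beta\alpha_T)$, which follows directly from the Bellman equation via $|J^*_\beta(x)-J^*_\beta(x')|\le \alpha_c\|x-x'\| + \beta\,\mathrm{Lip}(J^*_\beta)\,\alpha_T\|x-x'\|$ using Assumption~\ref{wass_assmpt}(b)(c) and Wasserstein duality applied to $J^*_\beta$. Then I would bound $|Q^*(x,u)-\hat{Q}^*(q(x),u)|$ in two pieces. The stage-cost piece $|c(x,u)-C^*(q(x),u)|\le \alpha_c\bar{L}$ is immediate from the definition of $C^*$ and the Lipschitz continuity of $c$. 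For the transition piece, using that the piecewise-constant extension satisfies $\hat{J}_\beta(q(x_1))=\hat{J}_\beta(x_1)$, one has $\sum_z \hat{J}_\beta(z) P^*(z|q(x),u) = \int_{B_{q(x)}}\int \hat{J}_\beta(x_1)\,\mathcal{T}(dx_1|x',u)\,\hat{\pi}_{q(x)}^*(dx')$; I would then add and subtract $J^*_\beta$ in place of $\hat{J}_\beta$ inside this integral, bounding the Lipschitz/Wasserstein contribution by $\mathrm{Lip}(J^*_\beta)\,\alpha_T\bar{L}$ and the sup-norm remainder by $\|\hat{J}_\beta-J^*_\beta\|_\infty\le \alpha_c\bar{L}/((1-\beta)(1-\beta\alpha_T))$, the latter being precisely Theorem~\ref{wass_thm}. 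Collecting the three pieces over the common denominator $(1-\beta)(1-\beta\alpha_T)$, the numerator telescopes as $(1-\beta)(1-\beta\alpha_T)+\beta\alpha_T(1-\beta)+\beta=1$, giving $\sup|Q^*-\hat{Q}^*|\le \alpha_c\bar{L}/((1-\beta)(1-\beta\alpha_T))$; multiplying by $2/(1-\beta)$ yields the claimed constant $2\alpha_c\bar{L}/((1-\beta)^2(1-\beta\alpha_T))$.

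The main obstacle is that the piecewise-constant extension of $\hat{J}_\beta$ is not Lipschitz, so Wasserstein duality cannot be applied to it directly in the transition-integral comparison. The remedy is the add-and-subtract step with the Lipschitz $J^*_\beta$, which trades that obstacle for a sup-norm error that is exactly the quantity controlled by Theorem~\ref{wass_thm}; after that the bookkeeping is routine, and the clean factor of $2$ in the final constant is a consequence of the telescoping identity $(1-\beta)(1-\beta\alpha_T)+\beta\alpha_T(1-\beta)+\beta=1$.
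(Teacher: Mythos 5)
Your argument is correct and reaches exactly the stated constant, but it takes a genuinely different route from the paper. The paper never introduces Q-functions: it first bounds $\sup_{x_0}|J_\beta(x_0,\hat{\gamma})-\hat{J}_\beta(x_0)|$ by comparing the two fixed-point (value-iteration) equations for $J_\beta(\cdot,\hat{\gamma})$ and $\hat{J}_\beta$, obtaining a recursion whose solution is $\frac{(1+\beta)\alpha_c}{(1-\beta)^2(1-\beta\alpha_T)}\bar{L}$, and then adds $\|\hat{J}_\beta-J^*_\beta\|_\infty\le\frac{\alpha_c\bar{L}}{(1-\beta)(1-\beta\alpha_T)}$ from Theorem~\ref{wass_thm} via the triangle inequality, with $(1+\beta)+(1-\beta)=2$ giving the final constant. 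You instead reduce everything to the single uniform estimate $\sup_{x,u}|Q^*(x,u)-\hat{Q}^*(q(x),u)|\le\frac{\alpha_c\bar{L}}{(1-\beta)(1-\beta\alpha_T)}$ and push it through a performance-difference identity plus the greedy-suboptimality factor $2$, summing $\sum_t\beta^t$; your telescoping identity $(1-\beta)(1-\beta\alpha_T)+\beta\alpha_T(1-\beta)+\beta=1$ checks out, as does the final arithmetic. Both proofs rest on the same two key devices — the Lipschitz bound $\mathrm{Lip}(J^*_\beta)\le\alpha_c/(1-\beta\alpha_T)$ and the add-and-subtract swap of the non-Lipschitz piecewise-constant $\hat{J}_\beta$ for the Lipschitz $J^*_\beta$ before invoking Wasserstein duality, with Theorem~\ref{wass_thm} absorbing the resulting sup-norm error — so the essential analytic content is shared; what your route buys is a cleaner modular structure (error enters only through one Q-function comparison, and the factor $2$ has the transparent greedy-policy interpretation), while the paper's route stays entirely at the level of value functions and avoids the performance-difference machinery. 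One small caveat: your one-line derivation of $\mathrm{Lip}(J^*_\beta)\le\alpha_c/(1-\beta\alpha_T)$ from the Bellman equation tacitly assumes $\mathrm{Lip}(J^*_\beta)<\infty$ (and $\beta\alpha_T<1$); to make it rigorous you should either run the estimate through the value-iteration sequence, whose iterates are Lipschitz with uniformly bounded constants, or simply cite the same result the paper does, namely \cite[Theorem 4.37]{SaLiYuSpringer}.
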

\begin{proof}
The proof can be found in Appendix \ref{wass_thm_robust_proof}.
\end{proof}

\subsubsection{Finite State Approximation with Weakly Continuous  Kernels and Asymptotic Convergence}

In this section, we assume that $\mathds{X}$ is $\sigma$-compact. That is, we can write $\mathds{X} = \cup_{k=1}^{\infty} B_k$ where each $B_k$ is compact. A finite dimensional Euclidean space is an example of such a space. Additionally, in this section, we focus on the models with transition kernels that are continuous only under the weak convergence topology. Here, instead of providing a rate of convergence, we will provide an asymptotic result. Let the quantizer be such that the $M^{th}$ bin be the over-flow bin; that is, the first $ M-1$ bins be the quantization of a compact set and the complement be assigned to $B_M$. To this end, let us define
\begin{align}\label{unif_loss-2}
L^-:=\max_{i=1,\dots,M-1}\sup_{x,x'\in B_i}\|x-x'\|.
\end{align}
Note that since $\mathds{X}$ is $\sigma$-compact, for each $M$, one can find a partition $\{B_i\}_{i=1}^M$ of the state space $\mathds{X}$ such that $L^- \rightarrow 0$ and $\bigcup_{i=1}^{M-1} B_i \nearrow \mathds{X}$ as $M \rightarrow \infty$. Note that $B_M = \mathbb{X} \setminus (\cup_{i=1}^{M-1} B_i)$. In the  following result, we assume that such a sequence of partitions is used to obtain the finite-state approximate models.  

\begin{theorem}{\cite[Theorem 4.27]{SaLiYuSpringer}}\label{weak_thm}
Under Assumption~\ref{weak:as1}, we have for any compact $K \subset \mathds{X}$
\begin{align*}
\sup_{x_0 \in K} \left|\hat{J}_\beta(x_0)-J^*_\beta(x_0)\right| &\to 0
\end{align*}
and
\begin{align*}
\sup_{x_0 \in K}\left|J_\beta(x_0,\hat{\gamma})-J^*_\beta(x_0)\right|&\to 0
\end{align*}
as $L^- \rightarrow 0$, where $\hat{\gamma}$ denotes the optimal policy of the finite-state approximate model extended to the state space $\mathds{X}$ via the quantization function $q$.
\end{theorem}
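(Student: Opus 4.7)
The plan is to prove both assertions by running value iteration in parallel for the original and the finite-state models and proving that the iterates converge uniformly on compacts; the two fixed points are then reached by a contraction argument. Throughout the plan I assume, by Theorem \ref{weak:thm2}, that $\mathds{U}$ has already been replaced by a sufficiently fine finite set, so $\mathds{U}$ is finite.

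\textbf{Step 1: Regularity of the true value iterates.} Under Assumption \ref{weak:as1} the operator $\mathbb{T}$ maps $C_b(\mathds{X})$ into $C_b(\mathds{X})$: weak continuity of $\mathcal{T}$ and continuity of $c$ imply that $(x,u) \mapsto c(x,u) + \beta \int v \, d\mathcal{T}(\cdot|x,u)$ is continuous for every $v \in C_b(\mathds{X})$, and the minimum over the finite set $\mathds{U}$ preserves continuity. Starting from $v^0 \equiv 0$, all iterates $v^t = \mathbb{T}^t 0$ are continuous and bounded by $\|c\|_\infty/(1-\beta)$, and $\|v^t - J^*_\beta\|_\infty \leq \beta^t \|c\|_\infty/(1-\beta)$, so $J^*_\beta$ is also continuous.

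\textbf{Step 2: Inductive convergence of the approximate iterates.} Let $\hat v^t$ denote the value iterates of the finite MDP $(\mathds{Y},\mathds{U},C^*,P^*)$ extended to $\mathds{X}$ via $\hat v^t(x) := \hat v^t(q(x))$. For $x \in B_i$ with $i < M$, a direct computation from (\ref{finite_cost}) gives
\begin{align*}
\hat v^{t+1}(x) = \min_{u \in \mathds{U}} \int_{B_i} \Bigl[c(x',u) + \beta \int \hat v^t(q(y)) \, \mathcal{T}(dy|x',u)\Bigr] \hat\pi^*_{y_i}(dx').
\end{align*}
I will show by induction on $t$ that $\sup_{x \in K}|\hat v^t(x) - v^t(x)| \to 0$ as $L^- \to 0$, for every compact $K \subset \mathds{X}$. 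The inductive step bounds the difference by three pieces: (a) a cost piece $\int_{B_i}|c(x',u)-c(x,u)|\hat\pi^*_{y_i}(dx')$, controlled by uniform continuity of $c$ on a suitable compact enlargement of $K$ (since $\mathrm{diam}(B_i) \leq L^-$ for $i < M$); (b) a kernel piece $\int_{B_i} |\int v^t d(\mathcal{T}(\cdot|x',u)-\mathcal{T}(\cdot|x,u))| \hat\pi^*_{y_i}(dx')$, controlled by the weak continuity of $\mathcal{T}$ together with continuity and boundedness of $v^t$ (applied uniformly on $K$ via a standard compactness-plus-weak-continuity argument); (c) an induction piece $\int |\hat v^t(q(y)) - v^t(y)|\mathcal{T}(dy|x',u)$.

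\textbf{Step 3: Handling the over-flow bin via tightness.} Term (c) is the crux. Since $\mathcal{T}$ is weakly continuous and $K \times \mathds{U}$ is compact, the family $\{\mathcal{T}(\cdot|x',u) : x' \in K, u \in \mathds{U}\}$ is tight. Given $\epsilon > 0$, choose a compact $K_\epsilon$ with $\sup_{x' \in K, u} \mathcal{T}(K_\epsilon^c|x',u) < \epsilon$. By the assumption that $\bigcup_{i=1}^{M-1} B_i \nearrow \mathds{X}$, for all sufficiently large $M$ we have $K_\epsilon \subset \bigcup_{i=1}^{M-1} B_i$, so on $K_\epsilon$ the inductive hypothesis bounds $|\hat v^t(q(y)) - v^t(y)|$ uniformly. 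On $K_\epsilon^c$ the boundedness $\|\hat v^t\|_\infty, \|v^t\|_\infty \leq \|c\|_\infty/(1-\beta)$ gives the bound $2\|c\|_\infty\epsilon/(1-\beta)$. Letting $\epsilon \downarrow 0$ and then $L^- \downarrow 0$ closes the induction.

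\textbf{Step 4: Passing to the fixed points and to the policy bound.} Since $\hat{\mathbb{T}}$ is a $\beta$-contraction with fixed point $\hat J_\beta$, $\|\hat J_\beta - \hat v^t\|_\infty \leq \beta^t\|c\|_\infty/(1-\beta)$; similarly for $J^*_\beta$. Therefore
\begin{align*}
\sup_{x \in K}|\hat J_\beta(x) - J^*_\beta(x)| \leq \sup_{x \in K}|\hat v^t(x) - v^t(x)| + \frac{2\beta^t\|c\|_\infty}{1-\beta},
\end{align*}
and picking $t$ large then $L^-$ small proves the first assertion. For the policy claim, apply the same program to value iteration under the fixed stationary policy $\hat\gamma = \bar\gamma \circ q$ in the true model versus under $\bar\gamma$ in the finite model: the iterates $w^t$ and $\hat w^t$ satisfy the same inductive comparison because $\hat\gamma$ is constant on each bin, so an identical tightness-based induction yields $\sup_{x \in K}|w^t(x) - \hat w^t(x)| \to 0$. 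Combined with $|J_\beta(x,\hat\gamma) - \hat J_\beta(q(x))| \leq \|w^t - J_\beta(\cdot,\hat\gamma)\|_\infty + \|\hat w^t - \hat J_\beta\|_\infty + \sup_{x\in K}|w^t - \hat w^t|$ and the first assertion, this gives $\sup_{x \in K}|J_\beta(x,\hat\gamma) - J^*_\beta(x)| \to 0$.

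The main obstacle is \textbf{Step 3}: without a modulus of continuity for $\mathcal{T}$ one cannot get a rate, and the over-flow bin $B_M$ prevents naive uniform bounds because $\hat v^t$ takes an arbitrary value there. The argument must rely essentially on the tightness that weak continuity provides on compact sets of parameters, together with the $\sigma$-compactness assumption that lets the non-overflow region exhaust $\mathds{X}$.
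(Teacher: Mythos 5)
The paper itself does not prove Theorem \ref{weak_thm}: it is quoted directly from \cite[Theorem 4.27]{SaLiYuSpringer}, so your argument has to stand on its own. For the first assertion (convergence of the value functions) your plan is essentially sound: under Assumption \ref{weak:as1} the weak-Feller property keeps the optimal-value iterates $v^t$ continuous and bounded, which is exactly what lets you control the kernel piece (b) through uniform continuity of $(x,u)\mapsto \int v^t \, d\mathcal{T}(\cdot|x,u)$ on a compact enlargement of $K$, and the tightness device in Step 3 is the right way to handle the over-flow bin. Two small repairs are needed there: the limits should be taken in the order ``fix $\epsilon$, let $L^-\to 0$, then let $\epsilon\downarrow 0$'' (your phrasing reverses them), and you should state explicitly the property of the partitions that every compact set is eventually contained in $\bigcup_{i=1}^{M-1}B_i$ — this holds for the intended exhausting construction but does not follow merely from $\bigcup_{i=1}^{M-1}B_i \nearrow \mathds{X}$.

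The genuine gap is in the policy half of Step 4. You assert that the ``identical tightness-based induction'' applies to the policy-evaluation iterates $w^t$ (true model under $\hat{\gamma}=\bar{\gamma}\circ q$) and $\hat{w}^t$ (finite model under $\bar{\gamma}$), ``because $\hat{\gamma}$ is constant on each bin.'' But your inductive step in Step 2 relies crucially on the continuity of $v^t$: that is what makes the term $\int v^t \, d\bigl(\mathcal{T}(\cdot|x',u)-\mathcal{T}(\cdot|x,u)\bigr)$ small under mere weak continuity. The policy iterates are not continuous: $\hat{\gamma}$ is piecewise constant, so already $w^1(x)=c(x,\hat{\gamma}(x))$ jumps across bin boundaries, and $\hat{w}^t\circ q$ is piecewise constant as well. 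Whichever way you add and subtract, one term of the comparison integrates one of these merely bounded measurable functions against $\mathcal{T}(\cdot|x,u)-\mathcal{T}(\cdot|x',u)$, and weak continuity gives no control over integrals of discontinuous integrands — this is precisely the obstruction that total-variation continuity removes in Theorem \ref{tv_thm_robust}, and precisely why the policy part under weak continuity is the delicate part of \cite[Theorem 4.27]{SaLiYuSpringer} and \cite{SaYuLi15c}. A repair requires a genuinely different step, e.g.\ working with the fixed-point equations for $J_\beta(\cdot,\hat{\gamma})$ and $\hat{J}_\beta$, replacing $\hat{J}_\beta\circ q$ by the continuous $J^*_\beta$ up to the error already controlled by your first assertion, and then iterating the resulting inequality finitely many times with tightness over the (growing) compact sets; as written, the claimed ``identical'' induction does not go through.
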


We note that the result by \citet[Theorem 4.27]{SaLiYuSpringer} is more general and applicable to unbounded cost functions as well. Under the bounded cost in Assumption \ref{weak:as1}, \citet[Theorem 4.27]{SaLiYuSpringer} implies Theorem \ref{weak_thm} above.

Weak continuity, or continuity under Wasserstein distance of the kernels are significantly less restrictive compared to other regularity conditions used in the literature for proving convergence and consistency results, where it is usually assumed that the transition models admit continuous probability density functions. 

On the other hand various MDP models used in the literature fail to satisfy total variation continuity or continuous density assumptions, and can only be shown to weakly continuous. 
\begin{itemize}
\item The belief MDP model, reduction of POMDPs to fully observed counterparts, can be shown to have weakly continuous transition models, see the papers by  \citet{KSYWeakFellerSysCont}  and \citet{FeKaZg14}. However, belief MDPs fail to satisfy stronger continuity assumptions in general.

\item Degenerate controlled diffusion models with piece-wise (in time) constant control policies can be viewed as MDPs, and they can be shown to have weakly continuous  transition models \citep{bayraktarKara}, whereas they fail to admit transition models continuous under total variation.

\item Similarly, mean field control problems, where  several exchangeable agents aim to minimize a common cost, can be posed as probability measure valued MDPs. The measure valued MDP can be shown to weakly continuous \citep{bayraktarKaraMean}
\end{itemize}

\subsubsection{Comparison and discussion}
Theorem \ref{tv_thm_robust} provides a bound that depends on the expectation of the loss function $L(x)$ defined in (\ref{loss_func}). This is, in particular, applicable when uniform quantization cannot be applied for a finite model approximation, which is the case when $\mathds{X}$ is not compact. The error bound in Theorem \ref{wass_thm}, however, uses a uniform bound defined in (\ref{unif_loss}). Although Theorem \ref{tv_thm_robust} requires a stronger assumption, the total variation continuity of the kernel instead of weak convergence metrics, the expected quantization error analysis (as opposed to the uniform error) provides the designer with more freedom to optimize the performance of the algorithm. Note that the uniform bound defined in (\ref{unif_loss}) does not depend on the weight measures $\{\hat{\pi}^*_{y_i}\}$ and is always fixed for every time step. Whereas, $L(x)$ defined in (\ref{loss_func}), uses the weight measures $\{\hat{\pi}^*_{y_i}\}$, for the distance of $x$ to the average of the quantization bin. Furthermore, $L(x)$ is not fixed for every time step, instead, at every time step, one needs the expected distance of $X_t$ to the mean of its quantization bin under the corresponding weight measure. Hence, using the expected loss function $L(x)$, one can adjust the quantization or the selection of the sets $\{B_i\}$'s accordingly, e.g., by performing finer quantization for the more often visited states. 

However, for the uniform bound in Theorem \ref{wass_thm}, to minimize $\bar{L}$ defined in (\ref{unif_loss}), one can only focus on the selection of the sets $\{B_i\}$'s by minimizing the diameter of the maximum possible set in the collection, which indeed provides a cruder upper bound, when $\mathds{X}$ is compact.

Theorem \ref{weak_thm}, on the other hand, requires only weak continuity and the state space does not need to be compact (and, accordingly, the quantizers are also not uniform). Thus, the model is applicable to many practical setups. On the other hand, Theorem \ref{weak_thm} does not provide a rate of convergence.

\section{{Quantized Models Viewed as POMDPs, the Quantized Q-Learning Algorithm and its Convergence to Near-Optimality}}\label{pomdp_sec}
In this section, we view the quantized MDPs as POMDPs with a quantizer channel.
\subsection{Q-Learning Algorithm for POMDPs}
For a partially observed MDP, the controller does not have full access to the state, but only a noisy version of the state is available, which are called the observations. Let $\mathds{Y}$ denote the observation space, which is assumed to be a finite set. The relation between the state variable $x$ and the observation variable $y$ is determined by a stochastic kernel (regular
conditional probability) $O$    from  $\mathds{X}$ to $\mathds{Y}$, such that
$O(\,\cdot\,|x)$ is a probability measure on the power set $P(\mathds{Y})$ of $\mathds{Y}$ for every $x
\in \mathds{X}$, and $O(A|\,\cdot\,): \mathds{X}\to [0,1]$ is a Borel measurable function for every $A \in P(\mathds{Y})$. In this setup, since the decision maker has access to a noisy version $Y_t$ of the state $X_t$ at each time step, the admissible policies are sequences of functions of observations and actions at each time step. 

The traditional Q-learning algorithm (\ref{Q_alg}) is constructed under the assumption that the controller can see the realizations of the state, and that the state is a controlled Markov chain. However, for POMDPs, the algorithm in (\ref{Q_alg}) is not directly applicable. A natural, though optimistic, suggestion to attempt to learn POMDPs would be to ignore the partial observability and pretend the noisy observations reflect the true state perfectly. Therefore, in this algorithm, the decision maker applies an arbitrary admissible policy $\gamma$  and collects realizations of observations, action, and stage-wise cost under this policy:
$$
Y_0,U_0,c(X_0,U_0),Y_1,U_1,c(X_1,U_1) \ldots. 
$$
Using this collection, it updates its Q-functions as follows: for $t\geq0$, if $(Y_t,U_t)=(x,u)$, then
\begin{align}\label{QPOMDP1}
&Q_{t+1}(Y_t,U_t)=(1-\alpha_t(Y_t,U_t)) \, Q_t(Y_t,U_t)\nonumber \\
&\phantom{xxxxxxxxxxxxxxxx}+\alpha_t(Y_t,U_t)\left(c(X_t,U_t)+\beta \min_{v \in \mathds{U}} Q_t(Y_{t+1},v)\right).
\end{align}
Note that the observation process is not a controlled Markov chain as past observations, when conditioned on the current observation,  can give  information about the future state variable, and so, can affect the next observation distribution. Second, the cost realization $c(X_t,U_t)$ depend on the observation $Y_t$ in a random and a time-dependent fashion. Indeed, for some $(y,u) \in \mathds{Y}\times\mathds{U}$, let $C_t(y,u)$ be a $\mathds{R}$-valued random variable with the following distribution:
$$
\Pr\left(C_t(y,u) \in \,\cdot\,\right) =P^{\gamma}_t(c(X_t,u) \in \cdot | Y_t=y,U_t=u), 
$$
where $P_t^{\gamma}$ is the conditional distribution of $X_t$ given $(Y_t,U_t)$ under the policy $\gamma$. Then, we can view $c(X_t,U_t)$ as a realization of the random variable $C_t(Y_t,U_t)$, where the expected value of the random variable $C_t(Y_t,U_t)$ is $\int_{\R} c(x,U_t) \, P_t^{\gamma}(dx|Y_t,U_t)$. 
For these two reasons, convergence of the above algorithm does not follow directly from usual techniques \citep{jaakkola1994convergence, TsitsiklisQLearning}. Additionally, even if the convergence is guaranteed, it is not immediate what the limit Q-function is, and whether it is meaningful at all. In particular, it is not known what MDP model gives rise to the limit Q-function. 

A general version of this approach is considered by \cite{kara2021convergence}, where the Q-iterations are constructed not only with the most recent observation but with a finite window of past observation and control action variables. It is shown that the algorithm is convergent under mild ergodicity conditions on the resulting state process $\{X_t\}_{t\geq0}$, and the limiting Q-function is an optimal Q-function of the approximate belief MDP. Furthermore, the learned policies are shown to be nearly optimal under some filter stability assumptions.

 The following assumptions will be imposed for the convergence.

\smallskip

\begin{assumption}\label{partial_q}
\hfill
\begin{itemize}
\item [(1.)] We let $\alpha_t(y,u)=0$ unless $(Y_t,U_t)=(y,u)$. Otherwise, let
\[\alpha_t(y,u) = {1 \over 1+ \sum_{k=0}^{t} 1_{\{Y_k=y, U_k=u\}} }.\]
\item [2.] Under the exploration policy $\gamma^*$,  $X_t$ is uniquely ergodic and thus has a unique invariant invariant measure $\pi_{\gamma^*}$. 
\item [3.] During the exploration phase, every observation-action pair $(y,u)$ is visited infinitely often.
\end{itemize}
\end{assumption}

{We note that a sufficient condition for the second item above is that the state process $\{X_t\}_{t\geq0}$ is positive Harris recurrent}. 



\smallskip

The following result is proved by \cite{kara2021convergence} (see also related results in  \cite{CsabaSmart} and \cite{singh1994learning}). It states the algorithm in (\ref{QPOMDP1}) converges to the optimal Q-function of a MDP whose system components can be described by transition kernel, observation channel, and stage-wise cost of the original POMDP.

\begin{theorem}\label{main_thm} {\cite[Theorem 4.1]{kara2021convergence}}
Under Assumption \ref{partial_q}, the algorithm given in (\ref{QPOMDP1}) converges almost surely to $Q^*$ which satisfies
\begin{align}\label{fixed}
Q^*(y,u)=C^*(y,u)+\beta\sum_{z \in \mathds{Y}} P^*(z|y,u)\min_{v \in \mathds{U}} Q^*(z,v).
\end{align}
Here, $P^*$ and $C^*$ are defined as follows 
\begin{align*}
P^*(y_1|y,u) &:=\int_{\mathds{X}}\int_{\mathds{X}}O(y_1|x_1) \, \mathcal{T}(dx_1|x_0,u) \, P^{\pi_{\gamma^*}}(dx_0|y)\\
C^*(y,u) &:=\int_{\mathds{X}}c(x_0,u)P^{\pi_{\gamma^*}}(dx_0|y)
\end{align*}
where $P^{\pi_{\gamma^*}}(dx_0|y)$ is the reverse channel induced by the observation channel $O$ and the invariant distribution $\pi_{\gamma^*}$ of the state process under exploration policy $\gamma^*$; that is,
$O(y|x) \, \pi_{\gamma^*}(dx) =  P^{\pi_{\gamma^*}}(dx|y) \, P^{\pi_{\gamma^*}}(y)$, where $P^{\pi_{\gamma^*}}(y) = \int_{\mathds{X}} O(y|x) \, \pi_{\gamma^*}(dx)$. In other words, 
$$
P^{\pi_{\gamma^*}}(A|y) := \frac{\int_{A}O(y|x) \, \pi_{\gamma^*}(dx)}{\int_{\mathds{X}} O(y|x) \, \pi_{\gamma^*}(dx)},
$$
almost surely. 
\end{theorem}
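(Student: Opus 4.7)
The plan is to recast (\ref{QPOMDP1}) as an asynchronous stochastic approximation scheme targeting a $\beta$-contractive operator on the finite Euclidean space $\R^{\mathds{Y}\times \mathds{U}}$, identify the correct limit by averaging out the non-Markovian noise using the unique ergodicity in Assumption \ref{partial_q}, and finally appeal to a Tsitsiklis-type convergence theorem for such schemes. Concretely, define $F: \R^{\mathds{Y}\times \mathds{U}} \to \R^{\mathds{Y}\times \mathds{U}}$ by
\begin{align*}
(FQ)(y,u) := C^*(y,u) + \beta \sum_{z \in \mathds{Y}} P^*(z|y,u) \min_{v \in \mathds{U}} Q(z,v),
\end{align*}
which is a sup-norm contraction with modulus $\beta$ and hence admits a unique fixed point $Q^*$ satisfying (\ref{fixed}). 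Rearranging (\ref{QPOMDP1}), one obtains, on the event $\{(Y_t,U_t)=(y,u)\}$,
\begin{align*}
Q_{t+1}(y,u) = (1-\alpha_t(y,u)) Q_t(y,u) + \alpha_t(y,u)\bigl[(FQ_t)(y,u) + w_t(y,u)\bigr],
\end{align*}
with the ``noise''
\begin{align*}
w_t(y,u) = \bigl(c(X_t,u) - C^*(y,u)\bigr) + \beta\Bigl(\min_v Q_t(Y_{t+1},v) - \sum_z P^*(z|y,u)\min_v Q_t(z,v)\Bigr).
\end{align*}

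The central obstacle is that $w_t(y,u)$ is \emph{not} a martingale difference with respect to the natural observation-history filtration: since $\{Y_t\}$ is not a controlled Markov chain, conditioning on the observation-action past does not give zero mean to either summand. The remedy is to enlarge the filtration to $\mathcal{F}_t := \sigma(X_0,\ldots,X_t,U_0,\ldots,U_{t-1})$ and use Assumption \ref{partial_q} to control what is \emph{not} a martingale difference there by averaging. By unique ergodicity of $\{X_t\}$ under $\gamma^*$ and the very definitions of $P^*, C^*$ through the reverse channel $P^{\pi_{\gamma^*}}(dx|y)$, the joint empirical distribution of $(X_t,Y_t,U_t,X_{t+1},Y_{t+1})$ converges almost surely to the invariant product measure $\pi_{\gamma^*}(dx_0)\, O(y|x_0)\,\gamma^*(du|y)\,\mathcal{T}(dx_1|x_0,u)\,O(y_1|x_1)$. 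Fixing $(y,u)$ and letting $\tau_1<\tau_2<\cdots$ denote the (a.s.\ infinite, by item 3 of Assumption \ref{partial_q}) visit times to $(y,u)$, the ergodic theorem yields the almost-sure limits
\begin{align*}
\frac{1}{N}\sum_{k=1}^N c(X_{\tau_k},u) \to C^*(y,u), \qquad \frac{1}{N}\sum_{k=1}^N \mathds{1}_{\{Y_{\tau_k+1}=z\}} \to P^*(z|y,u),
\end{align*}
which is exactly what is needed to drive the non-martingale part of $w_t$ to zero in a Ces\`aro sense.

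From here I would split $w_t$ into (i) a martingale-difference piece with respect to $\{\mathcal{F}_t\}$ whose cumulative contribution to $Q_t$ is handled by a Robbins--Siegmund/Azuma argument leveraging $\sum_t \alpha_t^2(y,u)<\infty$ (guaranteed by the harmonic step size in Assumption \ref{partial_q}.1), and (ii) an ``ergodic residual'' whose partial Ces\`aro sums vanish almost surely by the limits displayed above. With this decomposition, $\{Q_t\}$ satisfies an asynchronous stochastic approximation for the $\beta$-contraction $F$ with asymptotically negligible noise along each coordinate $(y,u)$; the standard Tsitsiklis framework then applies via a sandwiching argument, constructing monotone auxiliary sequences $Q_t^-\le Q_t \le Q_t^+$ whose common limit is $Q^*$ by the contraction of $F$. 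The hard part is precisely step (ii): the non-Markovian structure of the observations forces the ergodic-averaging argument on the enriched filtration, and both the ``unique invariant measure'' and ``infinitely often visits'' items of Assumption \ref{partial_q} are used in an essential way. The conclusion $Q_t\to Q^*$ almost surely in sup norm then follows, identifying the fixed point with the one characterized by (\ref{fixed}).
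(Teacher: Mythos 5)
You should note at the outset that the paper itself gives no proof of this statement: Theorem~\ref{main_thm} is imported verbatim from the cited reference, so the comparison is against the argument given there. Your overall strategy --- recasting (\ref{QPOMDP1}) as asynchronous stochastic approximation for the $\beta$-contraction built from $C^*,P^*$ and using unique ergodicity of the hidden state under $\gamma^*$ to identify the limit through the reverse channel --- is the right one and matches the spirit of that proof. However, your step (ii) has a genuine gap. The problematic part of your noise is $\beta\bigl(\min_v Q_t(Y_{t+1},v)-\sum_z P^*(z|y,u)\min_v Q_t(z,v)\bigr)=\beta\sum_z\bigl(1_{\{Y_{t+1}=z\}}-P^*(z|y,u)\bigr)\min_v Q_t(z,v)$. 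The ergodic limits you display only give convergence of the visit-time Ces\`aro averages of $c(X_{\tau_k},u)$ and of the indicators $1_{\{Y_{\tau_k+1}=z\}}$; they do not imply that the Ces\`aro averages of these centered quantities multiplied by the time-varying weights $\min_v Q_{\tau_k}(z,v)$ vanish, since a bounded sequence $a_k$ with vanishing Ces\`aro average can correlate with bounded weights $m_k$ so that $\frac{1}{N}\sum_k a_k m_k$ does not tend to zero. Enlarging the filtration does not rescue this: conditioning at a visit time gives $\int O(z|x_1)\,\mathcal{T}(dx_1|X_{\tau_k},u)$ rather than $P^*(z|y,u)$, so after removing the martingale part you are left with exactly this $Q_t$-weighted bias, and invoking the ergodic theorem for a function that changes with $k$ presupposes that $Q_t$ converges --- which is circular.

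The cited proof avoids this by centering at the fixed point rather than at $FQ_t$: write the update target as $\bigl[c(X_t,u)+\beta\min_v Q^*(Y_{t+1},v)-Q^*(y,u)\bigr]+\beta\bigl[\min_v Q_t(Y_{t+1},v)-\min_v Q^*(Y_{t+1},v)\bigr]$. The first bracket is a fixed bounded function of the sample path whose visit-time averages converge almost surely to zero by ergodicity and the definitions of $C^*$ and $P^*$ (this is where the harmonic step size in Assumption~\ref{partial_q} is essential, since it makes $Q_t(y,u)$ essentially such an empirical average), while the second bracket is bounded by $\beta\|Q_t-Q^*\|_\infty$; this yields $\limsup_t\|Q_t-Q^*\|_\infty\le\beta\limsup_t\|Q_t-Q^*\|_\infty$ and hence convergence. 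If you replace your step (ii) by this $Q^*$-centering, your argument goes through; as written, the claim that the ``ergodic residual'' vanishes does not follow from the limits you established. A minor further caveat: unique ergodicity alone does not give the pathwise ergodic theorem from an arbitrary initial condition; the positive Harris recurrence mentioned after Assumption~\ref{partial_q} is the condition that actually underwrites the almost-sure statements you use.
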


\subsection{{Quantized Models Viewed as POMDPs and the Quantized Q-Learning Algorithm}}
In this section, we consider the Q-learning algorithm in (\ref{q_alg2}) that is established for fully observed MDPs with a continuous state space. Before the convergence result, recall that we observed in Theorem \ref{weak:thm2} that any MDP with weakly continuous transition probability can be approximated by MDPs with finite action spaces. Thus, to make the presentation shorter, we will either assume that the action set is finite, or it has been approximated with arbitrarily small approximation error by a finite action set through the construction in Theorem \ref{weak:thm2}. Assuming finite action sets will help us avoid measurability issues \citep[see universal measurability discussions in][]{SaYuLi15c} as well as issues with existence of optimal policies.

As before, let $\mathds{Y}$ be a finite set, which will play a role for the approximations of $\mathds{X}$. Recall the quantization map  $q:\mathds{X}\to\mathds{Y}$  defined in (\ref{quant_map}) such that for any $x\in\mathds{X}$, $q(x)=y_i$ for some $y_i\in\mathds{Y}$, where $y_i$'s are the representative states for the collection of disjoint sets $\{B_i\}_{i=1}^M$ so that $\bigcup_i B_i=\mathds{X}$ and $B_i\bigcap B_j =\emptyset$ for any $i\neq j$. 

Let us recall the Q-learning algorithm in (\ref{q_alg2}). In this learning algorithm, the decision maker applies the exploration policy $\gamma^*$ and collects realizations of state, action, and stage-wise cost under this policy:
\begin{align*}
X_0,U_0,c(X_0,U_0),X_1,U_1,c(X_1,U_1) \ldots. 
\end{align*}
Using this collection, the Q-functions which are defined for the quantized state action pairs in $\mathds{Y} \times \mathds{U}$ are updated as follows: for $t\geq0$, if $(X_t,U_t)=(x,u)\in\mathds{X}\times\mathds{U}$, then
\begin{align}\label{q_alg3}
&Q_{t+1}(q(x),u)=(1-\alpha_t(q(x),u)) \, Q_t(q(x),u) \nonumber \\
&\phantom{xxxxxxxxxxxxxxxx}+\alpha_t(q(x),u)\left(c(x,u)+\beta \min_{v \in \mathds{U}} Q_t(q(X_{t+1}),v)\right).  
\end{align}
We interpret this iteration as a special case of the POMDP iteration (\ref{QPOMDP}) by considering the dicretization as a quantizer channel. If we consider the finite set $\mathds{Y}$ as the observation space and define the observation channel $O$ as $O(y_i|x)=1_{\{x\in B_i\}}$, for $i=1, \cdots, M$, then the algorithm in (\ref{q_alg3}) is the same algorithm as in $(\ref{QPOMDP})$. Therefore, the following result is then a direct corollary of Theorem \ref{main_thm}.

\begin{algorithm}[H] \label{q-algorithm}
\caption{{Learning Algorithm: Quantized Q-Learning}}
\label{Qit}

\begin{algorithmic}{
\STATE{Input: $Q_0$ (initial $Q$-function), $q:\mathds{X}\rightarrow\mathds{Y}$ (quantizer), $\gamma^*$ (exploration policy), $L$ (number of data points), $\{N(y,u)=0\}_{(y,u) \in \mathds{\sY\times\mathds{U}}}$ (number of visits to state-action pairs)}.
\STATE{Start with $Q_0$}
\FOR{$t=0,\ldots,L-1$}
\STATE{
\begin{itemize}
\item[\ding{43}] If $(X_t,U_t)$ is the current state-action pair $\Longrightarrow$ generate the cost $c(X_t,U_t)$ and the next state $X_{t+1} \sim \mathcal{T}(\,\cdot\,|X_t,U_t)$, and set 
$$N(q(X_t),U_t) = N(q(X_t),U_t) + 1.$$ 
\item[\ding{43}] Update $Q$-function $Q_t$ for the inputs $(q(X_t),U_t)$ as follows:\\
\begin{align}
&Q_{t+1}(q(X_t),U_t)=(1-\alpha_t(q(X_t),U_t)) \, Q_t(q(X_t),U_t) \nonumber \\
&\phantom{xxxxxxxxxxxxxxxx}+\alpha_t(q(X_t),U_t)\left(c(X_t,U_t)+\beta \min_{v \in \mathds{U}} Q_t(q(X_{t+1}),v)\right), \nonumber 
\end{align}
where 
$$
\alpha_t(q(X_t),U_t) = \frac{1}{1+N(q(X_t),U_t)}.
$$
\item[\ding{43}] Generate  $U_{t+1} \sim \gamma^*$.
\end{itemize}
}
\ENDFOR
\RETURN{$Q_L$}}
\end{algorithmic}
\end{algorithm}

\begin{theorem}\label{q_conv}
Under Assumption \ref{partial_q}, for every pair $(y_i,u)\in\mathds{Y}\times\mathds{U}$, the algorithm given above converges to
\begin{align*}
Q^*(y_i,u)=C^*(y_i,u)+\beta\sum_{y_j\in\mathds{Y}} P^*(y_j|y_i,u)\min_{v \in \mathds{U}}Q^*(y_j,v).
\end{align*}
Here, $P^*$ and $C^*$ are defined by
\begin{align}\label{q_limit}
C^*(y_i,u) &= \int_{B_i}c(x,u) \, \hat{\pi}_{y_i}^*(dx)\nonumber\\
P^*(y_j|y_i,u) &= \int_{B_i}\mathcal{T}(B_j|x,u)\, \hat{\pi}_{y_i}^*(dx),
\end{align}
where
\begin{align}
\hat{\pi}_{y_i}^*(A):=\frac{\pi_{\gamma^*}(A)}{\pi_{\gamma^*}(B_i)}, \quad \forall A\subset B_i, \quad \forall i\in \{1,\dots,M\},
\end{align}
and $\pi_{\gamma^*}$ is the invariant measure of the state process under the exploration policy $\gamma^*$.
\end{theorem}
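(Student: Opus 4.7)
The strategy is to recognize this as a direct corollary of Theorem~\ref{main_thm} by interpreting the quantized MDP as a POMDP with a deterministic quantizer channel. First I would define the observation channel
$$O(y_i \mid x) := \mathbf{1}_{\{x \in B_i\}}, \qquad i=1,\dots,M,$$
so that the event $\{Y_t = y_i\}$ is measurable with respect to $X_t$ and coincides exactly with $\{q(X_t) = y_i\}$. Under this identification, the update rule (\ref{q_alg3}) is literally an instance of (\ref{QPOMDP1}) with $Y_t = q(X_t)$, since $q(X_t) = q(x)$ when $X_t = x$ and $q(X_{t+1})$ plays the role of the next observation.

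Next I would verify that Assumption~\ref{partial_q} transfers to this POMDP formulation. The step-size choice in item~(1) matches verbatim. Item~(2) is identical since it is a hypothesis on the underlying $X_t$ process, and item~(3) on visiting every $(y,u)$ infinitely often holds because the partition $\{B_i\}$ has positive $\pi_{\gamma^*}$-mass on each $B_i$ (or, is assumed by hypothesis), and every action in the finite set $\mathds{U}$ is chosen infinitely often under $\gamma^*$. Thus Theorem~\ref{main_thm} applies and yields almost sure convergence of $Q_t$ to the fixed point $Q^*$ of (\ref{fixed}).

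It remains to compute the POMDP-level quantities $P^*$ and $C^*$ for this particular channel. The reverse channel induced by $O$ and $\pi_{\gamma^*}$ is
$$P^{\pi_{\gamma^*}}(A \mid y_i) = \frac{\int_A O(y_i \mid x)\, \pi_{\gamma^*}(dx)}{\int_{\mathds{X}} O(y_i \mid x)\, \pi_{\gamma^*}(dx)} = \frac{\pi_{\gamma^*}(A \cap B_i)}{\pi_{\gamma^*}(B_i)},$$
which is exactly $\hat{\pi}^*_{y_i}(A)$ when $A \subset B_i$, and zero mass outside $B_i$. Substituting this into the definitions of $C^*$ and $P^*$ in Theorem~\ref{main_thm} gives
$$C^*(y_i,u) = \int_{B_i} c(x,u)\, \hat{\pi}^*_{y_i}(dx),$$
$$P^*(y_j \mid y_i, u) = \int_{B_i} \int_{\mathds{X}} \mathbf{1}_{\{x_1 \in B_j\}}\, \mathcal{T}(dx_1 \mid x_0, u)\, \hat{\pi}^*_{y_i}(dx_0) = \int_{B_i} \mathcal{T}(B_j \mid x_0, u)\, \hat{\pi}^*_{y_i}(dx_0),$$
which matches (\ref{q_limit}).

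No step is genuinely hard here; the main conceptual point, and the only place that requires care, is justifying that the deterministic quantizer $q$ legitimately plays the role of a stochastic observation channel so that Theorem~\ref{main_thm} is applicable without modification. Once that identification is in place, the rest is a direct specialization of the reverse-channel formula to indicator-valued $O$.
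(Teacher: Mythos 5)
Your proposal is correct and follows essentially the same route as the paper: the paper also proves this result by defining the quantizer channel $O(y_i|x)=1_{\{x\in B_i\}}$, observing that (\ref{q_alg3}) is then literally the POMDP iteration (\ref{QPOMDP1}), and invoking Theorem \ref{main_thm}, with the reverse channel specializing to $\hat{\pi}^*_{y_i}$ exactly as you computed. Your additional check that Assumption \ref{partial_q} carries over is a reasonable elaboration of what the paper treats as immediate.
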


\subsection{Error Analysis for {Convergence of Quantized Q-Learning} for Continuous Space MDPs}\label{error_analysis}

The model described in (\ref{finite_cost}) is the same model given by the equations (\ref{q_limit}). Hence, the result and error bounds from Section \ref{pomdp_sec} can be used for the error analysis of the Q-learning algorithm given by (\ref{q_alg3}). For the remainder of this section, 
we present a series of results for the performance of the policies learned through the approximate Q-learning algorithm in (\ref{q_alg3}) building on the results from Section \ref{pomdp_sec}. In these corollaries, it is always assumed that $\pi_{\gamma^*}(B_i)>0$ for all $i\in\{1,\dots,M\}$ where $B_i$'s are the quantization bins and $\pi_{\gamma^*}$ is the invariant measure on the state process under the exploration policy $\gamma^*$.

\subsubsection{Error Analysis for Non-Compact MDPs}
Our first result is in asymptotic nature and requires very mild conditions for the convergence (i.e., continuity of the stage-wise cost and weak continuity of the transition kernel). It follows from Theorem~\ref{q_conv} and Theorem~\ref{weak_thm}.

\begin{corollary}\label{cor3}
Under Assumption~\ref{partial_q} and Assumption~\ref{weak:as1}, the Q learning algorithm in (\ref{q_alg3}) converges to $Q^*$ in Theorem~\ref{q_conv} with probability 1 and for any policy $\hat{\gamma}$ that satisfies $Q^*(x,\hat{\gamma}(x))=\min_{u \in \mathds{U}} Q^*(x,u)$ (i.e., greedy policy of $Q^*$), for any compact $K \subset \mathds{X}$, we have
\begin{align*}
&\sup_{x_0 \in K} \left|J_\beta(x_0,\hat{\gamma})-J^*_\beta(x_0)\right|\to 0
\end{align*}
as $L^-\to 0$, where $L^-$ is defined in (\ref{unif_loss-2}).
\end{corollary}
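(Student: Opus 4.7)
The plan is to derive the corollary by composing two earlier results: the convergence of Quantized Q-Learning (Theorem \ref{q_conv}) and the asymptotic near-optimality of finite-state approximations under weak continuity (Theorem \ref{weak_thm}). First, I would invoke Theorem \ref{q_conv}: since Assumption \ref{partial_q} holds, the iterates $Q_t(y,u)$ from (\ref{q_alg3}) converge almost surely to a function $Q^*:\mathds{Y}\times\mathds{U}\to\mathds{R}$ satisfying the finite discounted cost optimality equation
\begin{align*}
Q^*(y_i,u) = C^*(y_i,u) + \beta \sum_{y_j \in \mathds{Y}} P^*(y_j|y_i,u) \min_{v\in\mathds{U}} Q^*(y_j,v),
\end{align*}
with $C^*, P^*$ given in (\ref{q_limit}). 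This yields the first conclusion of the corollary.

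Next I would identify the fixed point $Q^*$ with an object already analyzed in Section \ref{finite_app_sec}. Choosing the weighting measure in (\ref{norm_inv})--(\ref{finite_cost}) to be $\pi^* = \pi_{\gamma^*}$, the invariant measure of the state process under the exploration policy (which is well-defined by Assumption \ref{partial_q}(2) and assigns positive mass to each $B_i$ by the standing hypothesis of Section \ref{error_analysis}), the pair $(C^*, P^*)$ from (\ref{q_limit}) coincides exactly with the finite-state MDP model defined in (\ref{finite_cost}). Consequently $Q^*$ is the optimal Q-function of that finite MDP, and any greedy selector $\hat{\gamma}(x) \in \arg\min_{u\in\mathds{U}} Q^*(q(x),u)$ is precisely the optimal stationary policy of the finite approximate MDP extended to $\mathds{X}$ by being constant on each quantization bin $B_i$ -- the very policy to which Theorem \ref{weak_thm} applies.

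With this identification in hand, Assumption \ref{weak:as1} together with Theorem \ref{weak_thm} immediately gives, for any compact $K \subset \mathds{X}$,
\begin{align*}
\sup_{x_0 \in K} \left|J_\beta(x_0,\hat{\gamma}) - J^*_\beta(x_0)\right| \to 0 \quad \text{as } L^- \to 0,
\end{align*}
which is the desired conclusion. The construction of the sequence of partitions with $L^- \to 0$ and $\bigcup_{i=1}^{M-1} B_i \nearrow \mathds{X}$ is possible because $\mathds{X}$ is $\sigma$-compact, as already noted in the statement of Theorem \ref{weak_thm}.

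There is no real obstacle beyond bookkeeping: the only mildly subtle point is the verification that the limiting stochastic-approximation fixed point (Theorem \ref{q_conv}) is the same finite MDP optimal Q-function used in the approximation analysis (Theorem \ref{weak_thm}). This is immediate once one uses $\pi_{\gamma^*}$ as the weighting measure $\pi^*$, so the result is a genuine corollary rather than a new theorem.
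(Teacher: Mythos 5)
Your proposal is correct and follows essentially the same route as the paper: the paper also obtains Corollary~\ref{cor3} by combining Theorem~\ref{q_conv} (almost sure convergence of the quantized iterates to the optimal Q-function of the finite model built from the weighting measure $\pi^*=\pi_{\gamma^*}$) with Theorem~\ref{weak_thm} (asymptotic optimality of the extended greedy policy of that finite model under weak continuity). The identification step you highlight—that the fixed point in (\ref{q_limit}) coincides with the finite model (\ref{finite_cost}) when $\pi^*=\pi_{\gamma^*}$—is exactly the observation the paper makes at the start of Section~\ref{error_analysis}.
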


We recall now that the error bounds to be presented in Corollary~\ref{cor1} and Corollary~\ref{cor2} below will involve the function $L$ and the uniform bound $\bar{L}$ which are defined as follows: for some $x\in\mathds{X}$ where $x$ belongs to a quantization bin $B_i$ whose representative state is $y_i$ (i.e. $q(x)=y_i$) and averaging measure $\hat{\pi}_{y_i}^*$, we have 
\begin{align*}
L(x)&:=\int_{B_i}\|x-x'\| \, \hat{\pi}_{y_i}^*(dx')\\
\bar{L}&:=\max_{i=1,\dots,M}\sup_{x,x'\in B_i}\|x-x'\|.
\end{align*}

The following result follows from Theorem~\ref{q_conv} and Theorem~\ref{tv_thm_robust}. 

\begin{corollary}\label{cor1}
Under Assumption~\ref{partial_q} and Assumption~\ref{tv_assmpt}, the Q-learning algorithm in (\ref{q_alg3}) converges to $Q^*$ in Theorem~\ref{q_conv} with probability 1 and for any policy $\hat{\gamma}$ that satisfies $Q^*(x,\hat{\gamma}(x))=\min_{u \in \mathds{U}} Q^*(x,u)$ (i.e., greedy policy of $Q^*$), for any initial state $x_0$, we have
\begin{align*}
\left|J_\beta(x_0,\hat{\gamma})-J^*_\beta(x_0)\right|\leq 2\left(\alpha_c+\frac{\beta\alpha_T\|c\|_\infty}{1-\beta}\right) \sum_{t=0}^\infty \beta^t \sup_{\gamma\in\Gamma} E^\gamma_{x_0}\left[L(X_t)\right].
\end{align*}
\end{corollary}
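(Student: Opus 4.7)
The plan is to derive Corollary \ref{cor1} by composing two results established earlier: the convergence result Theorem \ref{q_conv} for the quantized Q-learning algorithm, and the finite-state approximation bound Theorem \ref{tv_thm_robust} for transition kernels Lipschitz continuous in total variation. The strategy is to identify the limit $Q^*$ of the stochastic iterates with the optimal Q-function of the particular finite-state MDP of Section \ref{finitestate} obtained with weight measure $\pi^{*}=\pi_{\gamma^{*}}$, so that the greedy policy coming out of Q-learning coincides (after constant extension to $\mathds{X}$) with the approximate-model optimal policy $\hat{\gamma}$ that appears in Theorem \ref{tv_thm_robust}.

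First I would invoke Theorem \ref{q_conv}: under Assumption \ref{partial_q}, the iterates defined by (\ref{q_alg3}) converge almost surely to the unique solution $Q^{*}$ of the fixed-point equation (\ref{fixed}), with cost $C^{*}$ and transition $P^{*}$ given by (\ref{q_limit}). The key structural observation is that these $C^{*}$ and $P^{*}$ are exactly those defined in (\ref{finite_cost}) with the particular choice of weighting measure $\pi^{*} = \pi_{\gamma^{*}}$ on $\mathds{X}$ (so that $\hat{\pi}_{y_i}^{*}(A) = \pi_{\gamma^{*}}(A)/\pi_{\gamma^{*}}(B_i)$ for $A \subset B_i$). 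The positivity hypothesis $\pi_{\gamma^{*}}(B_i) > 0$ ensures that these normalized measures are well defined, and Assumption \ref{partial_q}(3) (every observation-action pair visited infinitely often) together with Assumption \ref{partial_q}(2) make this choice consistent with the invariant behavior driving the iterations. Hence $Q^{*}$ is, by construction, the optimal Q-function of the finite MDP of Section \ref{finitestate}.

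Next, since $Q^{*}$ is the optimal Q-function of this finite MDP on $\mathds{Y}\times\mathds{U}$, any selector $\hat{\gamma}_{\mathds{Y}}:\mathds{Y}\to\mathds{U}$ satisfying $Q^{*}(y,\hat{\gamma}_{\mathds{Y}}(y)) = \min_{v\in\mathds{U}} Q^{*}(y,v)$ is an optimal stationary policy for the finite model. Extending $\hat{\gamma}_{\mathds{Y}}$ to $\mathds{X}$ via the quantizer by $\hat{\gamma}(x):=\hat{\gamma}_{\mathds{Y}}(q(x))$ yields exactly the object denoted $\hat{\gamma}$ in the statement of Theorem \ref{tv_thm_robust} (piecewise-constant extension over the quantization bins). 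This identification is the step I expect to require the most care, because one has to be explicit that the greedy policy of $Q^{*}$ viewed as a function on $\mathds{X}\times\mathds{U}$ (i.e., any $\hat{\gamma}$ with $Q^{*}(x,\hat{\gamma}(x))=\min_u Q^{*}(x,u)$, which by the definition of $Q^{*}$ only depends on $q(x)$) is the same object; ties among the finitely many actions can be broken measurably since $\mathds{U}$ is finite here, bypassing the selection subtleties mentioned earlier.

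Finally, applying Theorem \ref{tv_thm_robust} under Assumption \ref{tv_assmpt} to the finite-state approximate MDP identified above gives, for every initial $x_{0}\in\mathds{X}$,
\begin{align*}
\left|J_\beta(x_0,\hat{\gamma})-J^*_\beta(x_0)\right|
\leq 2\left(\alpha_c+\frac{\beta\alpha_T\|c\|_\infty}{1-\beta}\right) \sum_{t=0}^\infty \beta^t \sup_{\gamma\in\Gamma} E^{\mathcal{T},\gamma}_{x_0}\left[L(X_t)\right],
\end{align*}
which is the asserted bound. The almost sure convergence in the first clause of the corollary is supplied directly by Theorem \ref{q_conv}, and the bound above holds pathwise for the limiting greedy policy $\hat{\gamma}$, completing the proof. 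The main obstacle is really just the bookkeeping in step two: once one recognizes that the stationary distribution $\pi_{\gamma^{*}}$ plays the role of $\pi^{*}$ in the definition (\ref{finite_cost}) of the finite-model cost and kernel, the result reduces to a direct application of Theorem \ref{tv_thm_robust}.
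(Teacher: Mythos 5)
Your proposal is correct and follows essentially the same route as the paper: the paper obtains Corollary~\ref{cor1} exactly by noting that the limit model in (\ref{q_limit}) coincides with the finite model (\ref{finite_cost}) constructed with weighting measure $\pi^*=\pi_{\gamma^*}$ (using $\pi_{\gamma^*}(B_i)>0$), so that the greedy policy of $Q^*$ is the finite-model optimal policy extended via $q$, and then applying Theorem~\ref{tv_thm_robust} on top of the almost sure convergence from Theorem~\ref{q_conv}. Your extra care in identifying the greedy selector with $\hat{\gamma}$ is exactly the bookkeeping the paper leaves implicit.
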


\subsubsection{Application to Models with Compact State Spaces}\label{compact_ref}

For the case with compact spaces, we obtain sharper bounds in the following.

The following result follows from Theorem~\ref{q_conv} and Theorem~\ref{wass_thm_robust}.

\begin{corollary}\label{cor2}
Under Assumption~\ref{partial_q} and Assumption~\ref{wass_assmpt}, the Q learning algorithm in (\ref{q_alg3}) converges to $Q^*$ in Theorem~\ref{q_conv} with probability 1 and for any policy $\hat{\gamma}$ that satisfies $Q^*(x,\hat{\gamma}(x))=\min_{u \in \mathds{U}} Q^*(x,u)$ (i.e., greedy policy of $Q^*$), we have
\begin{align*}
\sup_{x_0\in\mathds{X}}\left|J_\beta(x_0,\hat{\gamma})-J^*_\beta(x_0)\right|\leq  \frac{2\alpha_c}{(1-\beta)^2(1-\beta\alpha_T)}\bar{L}.
\end{align*}
where $\bar{L}$ is defined in (\ref{unif_loss}).
\end{corollary}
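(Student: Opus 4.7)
The plan is to simply chain Theorem \ref{q_conv} (which identifies the limit of the quantized Q-learning iterates) with Theorem \ref{wass_thm_robust} (which controls the performance loss of the greedy policy associated with the finite-state approximate MDP). No new machinery is needed; the entire task is to verify that the limiting Q-function produced by the learning algorithm is literally the optimal Q-function of the finite approximate model to which Theorem \ref{wass_thm_robust} applies, and that both sets of assumptions are simultaneously in force.

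First, under Assumption \ref{partial_q}, Theorem \ref{q_conv} gives, with probability one, that the iterates in (\ref{q_alg3}) converge to the function $Q^*$ satisfying
\begin{align*}
Q^*(y_i,u) = C^*(y_i,u) + \beta \sum_{y_j \in \mathds{Y}} P^*(y_j|y_i,u)\min_{v \in \mathds{U}} Q^*(y_j,v),
\end{align*}
with $C^*, P^*$ defined through (\ref{q_limit}) using the normalized invariant measure $\hat{\pi}^*_{y_i}$ induced by $\pi_{\gamma^*}$. Next, I would observe that these $C^*$ and $P^*$ are precisely the stage-wise cost and transition kernel (\ref{finite_cost}) of the finite-state approximate MDP introduced in Section \ref{finitestate}, obtained by selecting the weighting measure $\pi^* := \pi_{\gamma^*}$; this choice is admissible because $\pi_{\gamma^*}(B_i) > 0$ for every $i$ by the standing assumption of Section \ref{error_analysis}. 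Hence the fixed-point equation above is the discounted-cost optimality equation of a genuine finite MDP on $\mathds{Y}\times\mathds{U}$, and by uniqueness of the Q-function fixed point for such finite models, $Q^*$ coincides with the optimal Q-function of that finite MDP. The greedy policy $\hat{\gamma}$ with $Q^*(x,\hat{\gamma}(x)) = \min_{u} Q^*(x,u)$, extended to $\mathds{X}$ constantly over the bins via $q$, is therefore exactly the optimal policy of the finite-state approximate model as defined in Theorem \ref{wass_thm_robust}.

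Finally, Assumption \ref{wass_assmpt} supplies compactness of $\mathds{X}$, Lipschitz continuity of the cost with constant $\alpha_c$, and Lipschitz continuity of $\mathcal{T}$ in the first-order Wasserstein distance with constant $\alpha_T$, so Theorem \ref{wass_thm_robust} applies verbatim to this $\hat{\gamma}$ and yields
\begin{align*}
\sup_{x_0\in\mathds{X}}\left|J_\beta(x_0,\hat{\gamma})-J^*_\beta(x_0)\right|\leq  \frac{2\alpha_c}{(1-\beta)^2(1-\beta\alpha_T)}\bar{L},
\end{align*}
which is the claim. The assumption sets are compatible without conflict: Wasserstein-Lipschitz continuity implies weak continuity, and Assumption \ref{partial_q} is orthogonal, governing only the exploration policy and step-sizes. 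The only minor subtlety, and the thing worth double-checking rather than a genuine obstacle, is the identification step: ensuring that the weighting measure emerging from the POMDP limit (the invariant $\pi_{\gamma^*}$) is a valid choice in the finite-state construction of Section \ref{finitestate}, which it is precisely because of the positivity $\pi_{\gamma^*}(B_i) > 0$ already stipulated.
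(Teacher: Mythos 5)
Your proposal is correct and follows exactly the route the paper intends: identify the limit $Q^*$ from Theorem~\ref{q_conv} with the optimal Q-function of the finite-state model of Section~\ref{finitestate} built with weighting measure $\pi^*=\pi_{\gamma^*}$ (valid since $\pi_{\gamma^*}(B_i)>0$), then apply Theorem~\ref{wass_thm_robust} under Assumption~\ref{wass_assmpt}. This matches the paper's own (sketched) argument, so nothing further is needed.
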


Building on the results presented, we now show that for compact state spaces, the terms $L(x)$ and the uniform bound $\bar{L}$ can be explicitly bounded via cardinality of finite approximating set $\mathds{Y}$ and dimension $d$ of the state space. To this end, we assume that the state space $\mathds{X}\subset \mathds{R}^d$ is compact, and thus totally bounded. Then, for a given $M$, we can quantize $\mathds{X}$ by choosing a finite subset $\mathds{Y}=\{y_1,\dots,y_M\}$ such that 
\begin{align*}
\max_{x\in\mathds{X}}\min_{y_i\in\mathds{Y}}\|x-y_i\|\leq \alpha (1/M)^{1/d}
\end{align*}
for some $\alpha>0$, which is possible since $\mathds{X}$ is totally bounded (\cite[Theorem 2.3.1]{Dud02}). Using this construction, one can then write the following immediate bounds:
\begin{align*}
L(x) &\leq 2\alpha (1/M)^{1/d}, \, \text{ for all } x \in \mathds{X},\\
\bar{L} &\leq 2\alpha (1/M)^{1/d}.
\end{align*}
We can then state the following results, which follow from Corollary~\ref{cor1} and Corollary~\ref{cor2}.

\begin{corollary}\label{cor5}
If the state space $\mathds{X}\subset \mathds{R}^d$ is compact, under Assumption~\ref{partial_q} and Assumption~\ref{tv_assmpt}, the Q-learning algorithm in (\ref{q_alg3}) converges to $Q^*$ in Theorem~\ref{q_conv} with probability 1 and for any policy $\hat{\gamma}$ that satisfies $Q^*(x,\hat{\gamma}(x))=\min_{u \in \mathds{U}} Q^*(x,u)$ (i.e., greedy policy of $Q^*$), for any initial state $x_0$, we have
\begin{align*}
\left|J_\beta(x_0,\hat{\gamma})-J^*_\beta(x_0)\right|\leq \left(\alpha_c+\frac{\beta\alpha_T\|c\|_\infty}{1-\beta}\right) \frac{4\alpha (1/M)^{1/d}}{1-\beta}
\end{align*}
\end{corollary}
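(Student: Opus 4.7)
The plan is to obtain Corollary \ref{cor5} as an immediate consequence of Corollary \ref{cor1} by substituting into the bound the uniform estimate on the average quantization loss $L(x)$ that follows from total boundedness of $\mathds{X}$.

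First I would invoke Corollary \ref{cor1}, which under Assumptions \ref{partial_q} and \ref{tv_assmpt} already gives almost sure convergence of the quantized Q-learning iterates to $Q^*$ and the error bound
\begin{align*}
\left|J_\beta(x_0,\hat{\gamma})-J^*_\beta(x_0)\right|\leq 2\left(\alpha_c+\frac{\beta\alpha_T\|c\|_\infty}{1-\beta}\right) \sum_{t=0}^\infty \beta^t \sup_{\gamma\in\Gamma} E^\gamma_{x_0}\left[L(X_t)\right].
\end{align*}
So the task reduces to controlling $L(X_t)$ uniformly, independent of $t$, $\gamma$, and the realization.

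Next I would use the quantizer construction described immediately before the corollary. Since $\mathds{X}\subset\mathds{R}^d$ is compact it is totally bounded, so by \cite[Theorem 2.3.1]{Dud02} for any $M$ one can select $\mathds{Y}=\{y_1,\dots,y_M\}$ with $\max_{x\in\mathds{X}}\min_{y_i\in\mathds{Y}}\|x-y_i\|\leq \alpha(1/M)^{1/d}$. Taking the sets $\{B_i\}$ to be the Voronoi cells associated with $\mathds{Y}$ (ties broken arbitrarily but measurably), every point in $B_i$ lies within distance $\alpha(1/M)^{1/d}$ of $y_i$, hence $\operatorname{diam}(B_i)\leq 2\alpha(1/M)^{1/d}$. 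For any $x\in B_i$, the definition of $L$ in (\ref{loss_func}) then yields
\begin{align*}
L(x)=\int_{B_i}\|x-x'\|\,\hat\pi_{y_i}^*(dx')\leq \operatorname{diam}(B_i)\leq 2\alpha(1/M)^{1/d},
\end{align*}
uniformly in $x\in\mathds{X}$.

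Finally I would plug this uniform bound into the Corollary \ref{cor1} estimate: since $L(X_t)\leq 2\alpha(1/M)^{1/d}$ almost surely for every admissible $\gamma$ and every $t$, we get $\sup_{\gamma\in\Gamma}E^\gamma_{x_0}[L(X_t)]\leq 2\alpha(1/M)^{1/d}$, and summing the geometric series $\sum_{t\geq 0}\beta^t=1/(1-\beta)$ produces exactly the claimed bound
\begin{align*}
\left|J_\beta(x_0,\hat{\gamma})-J^*_\beta(x_0)\right|\leq \left(\alpha_c+\frac{\beta\alpha_T\|c\|_\infty}{1-\beta}\right)\frac{4\alpha(1/M)^{1/d}}{1-\beta}.
\end{align*}
There is no substantive obstacle here since all the heavy lifting has been done in Theorems \ref{tv_thm_robust} and \ref{q_conv}; the only point to be explicit about is that the Voronoi-cell construction makes the pointwise loss $L(x)$ uniformly dominated by the covering radius, which is what decouples the bound from the distribution of $X_t$ and from the policy.
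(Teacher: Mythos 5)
Your proposal is correct and follows essentially the same route as the paper: it derives the bound by plugging the uniform covering-radius estimate $L(x)\leq 2\alpha(1/M)^{1/d}$ (obtained from total boundedness of the compact $\mathds{X}$ and the nearest-neighbor/Voronoi quantizer, exactly the construction the paper cites from \cite[Theorem 2.3.1]{Dud02}) into Corollary~\ref{cor1} and summing the geometric series $\sum_{t\geq 0}\beta^t = 1/(1-\beta)$. No gaps to report.
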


\begin{corollary}
If the state space $\mathds{X}\subset \mathds{R}^d$ is compact, under Assumption~\ref{partial_q} and Assumption~\ref{wass_assmpt}, the Q learning algorithm in (\ref{q_alg3}) converges to $Q^*$ in Theorem~\ref{q_conv} with probability 1 and for any policy $\hat{\gamma}$ that satisfies $Q^*(x,\hat{\gamma}(x))=\min_{u \in \mathds{U}} Q^*(x,u)$ (i.e., greedy policy of $Q^*$), we have
\begin{align*}
\sup_{x_0\in\mathds{X}}\left|J_\beta(x_0,\hat{\gamma})-J^*_\beta(x_0)\right|\leq  \frac{4\alpha_c}{(1-\beta)^2(1-\beta\alpha_T)}{\alpha (1/M)^{1/d}}
\end{align*}
\end{corollary}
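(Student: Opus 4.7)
The plan is to derive this corollary as an immediate consequence of Corollary~\ref{cor2} by substituting the explicit quantization estimate for $\bar{L}$ that the paper has just established for compact subsets of $\mathds{R}^d$. Since both Assumption~\ref{partial_q} and Assumption~\ref{wass_assmpt} are in force, Corollary~\ref{cor2} directly gives almost sure convergence of the quantized Q-learning iterates to $Q^*$, and furthermore the bound
\[
\sup_{x_0\in\mathds{X}}\left|J_\beta(x_0,\hat{\gamma})-J^*_\beta(x_0)\right|\leq \frac{2\alpha_c}{(1-\beta)^2(1-\beta\alpha_T)}\,\bar{L}.
\]
So the only remaining task is to argue that $\bar{L}\leq 2\alpha(1/M)^{1/d}$ for a suitably chosen quantizer.

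For the quantization bound, I would invoke the total boundedness of $\mathds{X}\subset\mathds{R}^d$ (which follows from compactness) and Dudley's covering estimate \cite[Theorem 2.3.1]{Dud02} as the text already indicates, yielding a set $\mathds{Y}=\{y_1,\dots,y_M\}$ with
\[
\max_{x\in\mathds{X}}\min_{y_i\in\mathds{Y}}\|x-y_i\|\leq \alpha(1/M)^{1/d}.
\]
I would then take the bins $\{B_i\}_{i=1}^M$ to be the Voronoi cells associated with $\mathds{Y}$ (with ties broken measurably), so that $x\in B_i$ implies $\|x-y_i\|\leq \alpha(1/M)^{1/d}$. For any $x,x'\in B_i$, the triangle inequality then gives $\|x-x'\|\leq \|x-y_i\|+\|y_i-x'\|\leq 2\alpha(1/M)^{1/d}$, and taking the supremum over $i$ and $x,x'\in B_i$ yields $\bar{L}\leq 2\alpha(1/M)^{1/d}$ exactly as claimed in the display preceding Corollary~\ref{cor5}.

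Plugging this inequality into the Corollary~\ref{cor2} bound produces
\[
\sup_{x_0\in\mathds{X}}\left|J_\beta(x_0,\hat{\gamma})-J^*_\beta(x_0)\right|\leq \frac{4\alpha_c}{(1-\beta)^2(1-\beta\alpha_T)}\,\alpha(1/M)^{1/d},
\]
which is the stated bound. There is essentially no obstacle here, since the hard work (the error analysis under Wasserstein continuity and the POMDP-based convergence of the iterates) has already been carried out in Theorem~\ref{wass_thm_robust}, Theorem~\ref{q_conv}, and its packaging as Corollary~\ref{cor2}. The only subtlety worth noting is the implicit requirement that the chosen partition $\{B_i\}$ be compatible with the exploration policy, i.e., $\pi_{\gamma^*}(B_i)>0$ for all $i$, which is stipulated at the start of Section~\ref{error_analysis} and is preserved under refinement of the Voronoi partition since the $B_i$ are nonempty open (up to boundaries) cells covering $\mathds{X}$.
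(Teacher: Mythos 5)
Your proposal is correct and follows exactly the paper's route: the result is obtained by combining Corollary~\ref{cor2} with the covering-number bound $\bar{L}\leq 2\alpha(1/M)^{1/d}$ derived from total boundedness of the compact set $\mathds{X}\subset\mathds{R}^d$ via \cite[Theorem 2.3.1]{Dud02}. Your explicit Voronoi-cell construction and triangle-inequality step simply spell out the "immediate bounds" the paper asserts before stating the corollary, so there is no substantive difference.
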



\begin{remark}\label{compremark}
A linear approximation for the Q-values can be obtained using a collection of linearly independent basis functions $\{\phi_1,\dots,\phi_M\}$ where $\phi_i:\mathds{X}\times\mathds{U}\to\mathds{R}$ for each $i$. One, then, is interested in the optimization of
\begin{align*}
Q_\theta(x,u)=\sum_{i=1}^M\phi_i(x,u)\theta(i)
\end{align*}
by optimizing over a parameter $\theta\in\mathds{R}^M$. In our results, for $i\in\{1,\dots,M\}$ the basis functions are of the following type
\begin{align*}
\phi_i(x,u)=\mathds{1}_{(B,u)_i}(x,u)
\end{align*}
where $(B,u)_i$ is a pair of a state space quantization bin under $q$ (see (\ref{quant_map})) and a control action such that $M=|\mathds{Y}|\times|\mathds{U}|$ where $\mathds{Y}$ is the finite subset of the original state space $\mathds{X}$.

We have shown that under the quantized Q learning algorithm (\ref{q_alg3}), for $(y,u)_i\in\mathds{Y}\times\mathds{U}$ the parameter $\theta(i)$ is obtained as 
\begin{align*}
\theta(i)= Q^*((y,u)_i)
\end{align*}
where $Q^*$ satisfies (\ref{fixed}). Furthermore, $Q^*$ is the optimal Q-value function for a finite MDP model. Hence, using finite MDP approximations (Section \ref{finite_app_sec}), we are able to provide further insight for the error term.

We should note that, even though the convergence result and the provided error bounds are valid under any quantization map $q$ (sometimes referred to as the state aggregation map in the literature), the choice of $q$ plays a crucial role on the performance of the approximations. A naive choice for $q$ is a uniform quantization map, however, different choices can also be made, e.g.
\begin{itemize}
\item One can group the less likely state and action pairs together, considering the error term can be bounded by the expectation of the loss function $L$, see Corollary \ref{cor1},
\item The quantization can be finer where the optimal Q values changes fast or the quantization can be made cruder at parts where the Q values change slower.
\end{itemize}
These different choices for the quantization structure affect the performance of the learning algorithm, however, the provided error bounds and the convergence result still hold for any map $q$, under the sufficient conditions provided in the paper.
\end{remark}

\section{Numerical Studies}\label{num_stud}
We present two numerical examples.
\subsection{A Fisheries Management Problem}

In this numerical example, we consider the following population growth model, called a Ricker model,  \citep[see][Section 7.2]{SaYuLi15c}:
\begin{align}
X_{t+1} = \theta_1 U_t \exp\{-\theta_2 U_t + V_t\}, \text{ } t=0,1,2,\ldots \label{aux9}
\end{align}
where $\theta_1, \theta_2 \in \R_{+}$, $X_t$ is the population size in season $t$, and $U_t$ is the population to be left for spawning for the next season, or in other words, $X_t - U_t$ is the amount of fish captured in the season $t$. The one-stage `reward' function is $r(X_t-U_t)$, where $r$ is some utility function. In this model, the goal is to maximize the discounted reward. Note that all results in this paper apply  with straightforward modifications for the case of maximizing reward instead of minimizing cost.

The state and action spaces are $\mathds{X}=\mathds{U}=[\kappa_{\min},\kappa_{\max}]$, for some $\kappa_{\min}, \kappa_{\max} \in \R_{+}$. Since the population left for spawning cannot be greater than the total population, for each $x \in \mathds{X}$, the set of admissible actions is $\mathds{A}(x)=[\kappa_{\min},x]$ which is not consistent with our assumptions. However, we can (equivalently) reformulate above problem so that the admissible actions $\mathds{A}(x)$ will become $\mathds{A}$ for all $x\in\mathds{X}$. In this case, instead of dynamics in equation (\ref{aux9}) we have
\begin{align}
X_{t+1} = \theta_1 \min(U_t,X_t) \exp\{-\theta_2 \min(U_t,X_t) + V_t\}, \text{ } t=0,1,2,\ldots \nonumber
\end{align}
and $\mathds{A}(x) = [\kappa_{\min},\kappa_{\max}]$ for all $x\in\mathds{X}$. The one-stage reward function is $r(X_t-U_t)1_{\{X_t\geq U_t\}}$.

The noise process $\{V_{t}\}$ is a sequence of independent and identically distributed (i.i.d.) random variables which are uniformly distributed on $[0,\lambda]$. For the numerical results, we use the following values of the parameters:
\begin{align}
\theta_1 = 1.1, \text{ } \theta_2=0.1, \text{ }\kappa_{\max}=7, \text{ }\kappa_{\min}=0, \text{ }\lambda=0.5, \text{ } \beta=0.5.\nonumber
\end{align}
The utility function $r$ is taken to be the shifted isoelastic utility function 
\begin{align}
r(z) = 3 \bigl((z+0.5)^{1/3}-(0.5)^{1/3}\bigr). \nonumber
\end{align}
We selected 20 different values for the number $M$ of grid points to discretize the state space: $M=10,20,30,\ldots,200$. The grid points are chosen uniformly over the interval $[\kappa_{\min},\kappa_{\max}]$. We also uniformly discretize the action space $\mathds{A}$ by using the number of $70$ grid points.

{
We first implement the value iteration algorithm to compute the optimal value functions of the finite models. Finite models are constructed as in Section~\ref{finitestate} using uniform distribution $\pi^*$ on $[\kappa_{\min},\kappa_{\max}]$. Note that  $\pi^*$ is not the invariant probability measure of the state processes induced by exploration policy $\gamma^*$, and thus, the learning algorithm may not exactly converge to the optimal value of the finite model when the number of grid points $M$ is small. However, the optimal value functions of finite models are proved to be converging to the optimal value function of the original model as $M$ becomes larger for any $\pi^*$. Hence, the learned value functions converge to the optimal value functions of the finite models obtained via $\pi^*$ as $M$ gets larger. After we run value iteration algorithm for finite models, we use the Quantized Q-learning algorithm in (\ref{q_alg3}) to obtain the approximate value functions of the discretized models using the data points coming from the original model. For each discretization, we gradually increase the training set proportional to the number of states in the discretized model to achieve a high accuracy when the number of grid points is large. Moreover, we also run the learning algorithm for five different episodes. Finally, we compare value functions obtained through value iteration and Q-learning.}


Figure~\ref{gr3} shows the graph of the optimal value functions of the finite models  and value functions given by Q-learning algorithm for five different runs corresponding to the different values of $M$ (number of grid points), when the initial state is $x=1.5$. It can be seen that the value functions are close to each other and converge to the optimal value function of the original model as $M$ increases.

\begin{figure}[h]
\centering
\includegraphics[width=6in, height=2.5in]{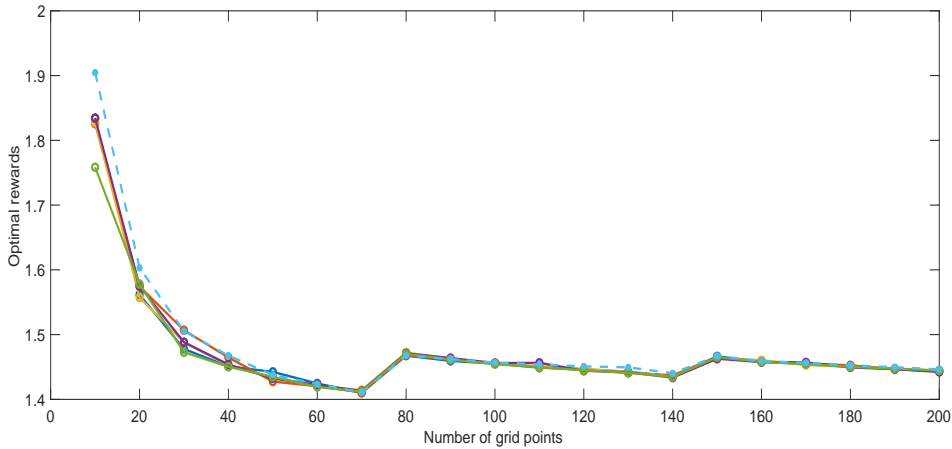}
\caption{Optimal rewards of the finite models (dashed curve) and learned rewards (other curves) when the initial state is $x=1.5$}
\label{gr3}
\end{figure}

{
\subsection{An Additive Noise System}

In this example, we consider the additive noise system:
\begin{align}
x_{t+1}=F(x_{t},a_{t})+v_{t}, \text{ } t=0,1,2,\ldots \nonumber
\end{align}
where $x_t, a_t, v_t \in \R$ and $\sX = \R$. Hence, the state space is non-compact. The noise process $\{v_{t}\}$ is a sequence of $\R$-valued i.i.d. random variables with common density $g$. The one-stage cost function is $c(x,a) = (x-a)^2$, the action space is $\sA = [-L,L]$ for some $L>0$. We assume that $g$ is a Gaussian probability density function with zero mean and variance $\sigma^2$.

For the numerical results, we use the following parameters: $F(x,a)=x+a$, $\beta=0.3$, $L=0.5$, and $\sigma = 0.1$.

We selected a sequence $\bigl\{[-l_n,l_n]\bigr\}_{n=1}^{24}$ of nested closed intervals, where $l_n = 0.5 + 0.25 n$, to approximate $\R$. Each interval is uniformly discretized. For the first half of the intervals $\bigl\{[-l_n,l_n]\bigr\}_{n=1}^{12}$, we use $0.1$ as the uniform bin length, and for the second half of the intervals $\bigl\{[-l_n,l_n]\bigr\}_{n=13}^{24}$, we use $0.05$ as the uniform bin length. Therefore, the discretization is refined after some point. For each $n$, the finite state space is given by $\{x_{n,i}\}_{i=1}^{k_n} \cup \{\Delta_n\}$, where $\{x_{n,i}\}_{i=1}^{k_n}$ are the representation points in the uniform quantization of the closed interval $[-l_n,l_n]$ and $\Delta_n$ is a pseudo state (see \cite[Section 3]{SaYuLi15c}). Here, the points outside of the interval $[-l_n,l_n]$ is mapped to the pseudo state by quantizer; that is, pseudo state $\Delta_n$ is the representation point of the overload region $\R \setminus [-l_n,l_n]$. We also uniformly discretize the action space $\sA = [-0.5,0.5]$ with $0.02$ as the length of the uniform bin.  For each $n$, the finite state models are constructed as in Section~\ref{finitestate} by using $\pi^*(\,\cdot\,) = \frac{1}{2} m_n(\,\cdot\,) + \frac{1}{2} \delta_{\Delta_n}(\,\cdot\,)$, where $m_n$ is the Lebesgue measure normalized over $[-l_n,l_n]$. We use the value iteration algorithm to compute the value functions of the finite models. Note that  $\pi^*$ is not the invariant probability measure of the state processes induced by exploration policy $\gamma^*$, and so, the learning algorithm may not exactly converge to the optimal value of the finite model when the number of grid points $M$ is small. However, it is known that optimal value functions of the finite models are proved to be converging to the optimal value function of the original model as $M$ becomes larger for any $\pi^*$. Hence, optimal value functions of the finite models obtained through invariant measure and $\pi^*$ converge to each other as $M$ gets larger. Hence, we expect that learned value functions converge to the optimal value functions of the finite models obtained via $\pi^*$.

After we run value iteration algorithm for finite models, we use the Q-learning algorithm in (\ref{q_alg3}) to obtain the approximate value functions of the discretized models using the data points coming from the original model. For each discretization, we again gradually increase the training set proportional to the number of states in the discretized model to achieve a high accuracy when the number of grid points are large. Moreover, we also run the learning algorithm for five different episodes. Finally, we compare value functions obtained through value iteration and Q-learning.

Figure~\ref{gr4} shows the graph of the optimal value functions of the finite models  and value functions given by Q-learning algorithm for five different runs corresponding to the different values of $M$ (number of grid points), when the initial state is $x=0.7$. It can be seen that the value functions are close to each other and converge to the optimal value function of the original model as $M$ increases.

\begin{figure}[h]
\centering
\includegraphics[width=6in, height=2.5in]{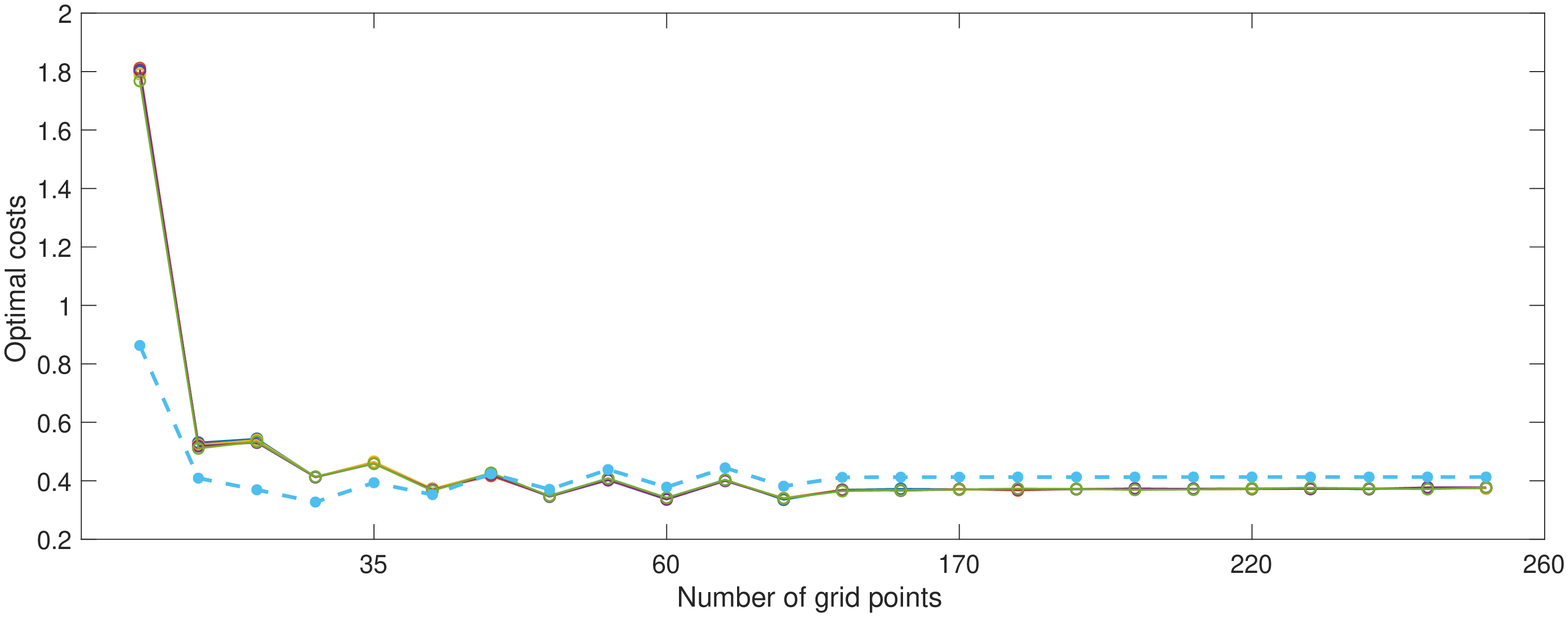}
\caption{Optimal costs of the finite models (dashed curve) and learned costs (other curves) when the initial state is $x=0.7$}
\label{gr4}
\end{figure}

}

\appendix

\section{Proof of Theorem \ref{tv_thm}}\label{tv_thm_proof}
\begin{proof}
We start by writing the DCOE for the the cost functions $J^*_\beta$
\begin{align*}
J^*_\beta(x_0)=\inf_{u\in\mathds{U}}\left\{c(x_0,u)+\beta \int J^*_\beta(x_1)\mathcal{T}(dx_1|x_0,u)\right\}.
\end{align*}

The DCOE for the the cost functions $\hat{J}_\beta$ can be written as
\begin{align*}
\hat{J}_\beta(x_0)=\inf_{u\in\mathds{U}}\left\{\int_{x\in B_0} c(x,u)\hat{\pi}_{x_0}(dx) +\beta \int_{x\in B_0}\int_{x_1\in\mathds{X}}\hat{J}_\beta(x_1) \mathcal{T}(dx_1|x,u)\hat{\pi}_{x_0}(dx)\right\}
\end{align*}
where $\hat{\pi}_{x_0}(dx)$ is the normalized  measure defined on the set $B_0$ such that the $B_0$ is the quantization bin $x_0$ belongs to. We note that, to write the DCOE in this alternative form, we used the fact that $\hat{J}_\beta(x)$ is constant over the quantization bins.  

Having these, now we can write
\begin{align*}
&\left|\hat{J}_\beta(x_0)-J^*_\beta(x_0)\right|\leq \sup_{u\in\mathds{U}}\left|c(x_0,u)-\int_{x\in B_0} c(x,u)\hat{\pi}_{x_0}(dx)\right| \\
&\qquad+ \beta \sup_{u\in\mathds{U}}\left| \int J^*_\beta(x_1)\mathcal{T}(dx_1|x_0,u)-\int_{x\in B_0}\int_{x_1\in\mathds{X}}\hat{J}_\beta(x_1) \mathcal{T}(dx_1|x,u)\hat{\pi}_{x_0}(dx)\right|\\
&= \sup_{u\in\mathds{U}}\left|\int_{x\in B_0}\left(c(x_0,u)- c(x,u)\right)\hat{\pi}_{x_0}(dx)\right|\\
&\qquad+ \beta\sup_{u\in\mathds{U}}\left|\int_{x\in B_0}\left( \int_{x_1\in\mathds{X}} J^*_\beta(x_1)\mathcal{T}(dx_1|x_0,u)-\int_{x_1\in\mathds{X}}\hat{J}_\beta(x_1) \mathcal{T}(dx_1|x,u)\right)\hat{\pi}_{x_0}(dx)\right|
\end{align*}
where for the last step we used the fact $c(x_0,u)$ and $\int_{x_1\in\mathds{X}}\hat{J}_\beta(x_1) \mathcal{T}(dx_1|x_0,u)$ are constants for the integration over the set $B_0$ under $\hat{\pi}_{x_0}(dx)$.

For the first term above, we write
\begin{align*}
& \sup_{u\in\mathds{U}}\left|\int_{x\in B_0}\left(c(x_0,u)- c(x,u)\right)\hat{\pi}_{x_0}(dx)\right|\leq  \sup_{u\in\mathds{U}}\int_{x\in B_0}\alpha_c|x_0-x|\hat{\pi}_{x_0}(dx)=\alpha_cL(x_0).
\end{align*}

For the second term, we start by adding and subtracting $\int_{x\in B_0}\int_{x_1\in\mathds{X}}\hat{J}_\beta(x_1) \mathcal{T}(dx_1|x_0,u)\hat{\pi}_{x_0}(dx)$ and we write
\begin{align*}
& \sup_{u\in\mathds{U}}\left|\int_{x\in B_0}\left( \int J^*_\beta(x_1)\mathcal{T}(dx_1|x_0,u)-\int_{x_1\in\mathds{X}}\hat{J}_\beta(x_1) \mathcal{T}(dx_1|x,u)\right)\hat{\pi}_{x_0}(dx)\right|\\
&\leq \int_{x\in B_0}\int_{x_1\in\mathds{X}}\left|J^*_\beta(x_1)-\hat{J}_\beta(x_1)\right| \mathcal{T}(dx_1|x_0,u)\hat{\pi}_{x_0}(dx)\\
&\qquad + \int_{x\in B_0}\left|\int_{x_1\in\mathds{X}}\hat{J}_\beta(x_1) \mathcal{T}(dx_1|x_0,u)-\int_{x_1\in\mathds{X}}\hat{J}_\beta(x_1) \mathcal{T}(dx_1|x,u)\right|\hat{\pi}_{x_0}(dx)\\
&\leq \sup_{\gamma\in\Gamma}E_{x_0}^\gamma\left[\left|J^*_\beta(X_1)-\hat{J}_\beta(X_1)\right|\right]+\int_{x\in B_0}\|\hat{J}_\beta\|_\infty\alpha_T|x_0-x|\hat{\pi}_{x_0}(dx)\\
&= \sup_{\gamma\in\Gamma}E_{x_0}^\gamma\left[\left|J^*_\beta(X_1)-\hat{J}_\beta(X_1)\right|\right]+\|\hat{J}_\beta\|_\infty\alpha_TL(x_0).
\end{align*}

Combining what we have so far
\begin{align*}
&\left|\hat{J}_\beta(x_0)-J^*_\beta(x_0)\right|\leq \alpha_cL(x_0)+\|\hat{J}_\beta\|_\infty\alpha_T\beta L(x_0)+\beta\sup_{\gamma\in\Gamma}E_{x_0}^\gamma\left[\left|J^*_\beta(X_1)-\hat{J}_\beta(X_1)\right|\right].
\end{align*}
Repeating the same steps for $E_{x_0}^\gamma\left[\left|J^*_\beta(X_1)-\hat{J}_\beta(X_1)\right|\right]$, we can have
\begin{align*}
&\left|\hat{J}_\beta(x_0)-J^*_\beta(x_0)\right|\leq \left(\alpha_c+\beta\alpha_T\|\hat{J}_\beta\|_\infty\right)\sum_{t=0}^1\beta^t\sup_{\gamma\in\Gamma}E^\gamma_{x_0}\left[L(X_t)\right]+\beta^2\sup_{\gamma\in\Gamma}E_{x_0}^\gamma\left[\left|J^*_\beta(X_2)-\hat{J}_\beta(X_2)\right|\right].
\end{align*}
By repeating this procedure, since $c$ is bounded, we can conclude that
\begin{align*}
&\left|\hat{J}_\beta(x_0)-J^*_\beta(x_0)\right|\leq \left(\alpha_c+\beta\alpha_T\|\hat{J}_\beta\|_\infty\right)\sum_{t=0}^\infty\beta^t\sup_{\gamma\in\Gamma}E^\gamma_{x_0}\left[L(X_t)\right].
\end{align*}
The proof follows by noting that $\|\hat{J}_\beta\|_\infty\leq \frac{\|c\|_\infty}{1-\beta}$.
\end{proof}

\section{Proof of Theorem \ref{tv_thm_robust}}\label{tv_thm_robust_proof}
\begin{proof}
With $\hat{\gamma}$ being optimal for the approximate model, by the triangle inequality we have
\begin{align*}
\left|J_\beta(x_0,\hat{\gamma})-J^*_\beta(x_0)\right|\leq \left|J_\beta(x_0,\hat{\gamma})-\hat{J}_\beta(x_0)\right|+\left|\hat{J}_\beta(x_0)-J^*_\beta(x_0)\right|.
\end{align*}
Note that the second term is bounded by Theorem \ref{tv_thm}. We now focus on the first term. We write the following value function iterations for $J_\beta(x_0,\hat{\gamma})$:
\begin{align*}
v_{k+1}(x_0)=c(x_0,\hat{\gamma}(x_0))+\beta\int v_k(x_1)\mathcal{T}(dx_1|x_0,\hat{\gamma}(x_0))
\end{align*}
for $v_0(x_0)=c(x_0,\hat{\gamma}(x_0))$.

Furthermore, the value function iterations for $\hat{J}_\beta(x_0)$ can be written as
\begin{align*}
\hat{v}_{k+1}(x_0)=\int_{x\in B_0} c(x,\hat{\gamma}(x_0))\hat{\pi}_{x_0}(dx) +\beta \int_{x\in B_0}\int_{x_1\in\mathds{X}}\hat{v}_k(x_1) \mathcal{T}(dx_1|x,\hat{\gamma}(x_0))\hat{\pi}_{x_0}(dx)
\end{align*}
where $\hat{v}_0(x_0)=\int_{x\in B_0} c(x,\hat{\gamma}(x_0))\hat{\pi}_{x_0}(dx)$ such that $B_0$ is the set $x_0$ belongs to.

For the value functions approximations, we have the following uniform bounds using the fact that the dynamic programming operator is a contraction under the supremum norm with modulus $\beta$:
\begin{align}\label{value_bounds}
\big|\hat{J}_\beta(x_0)-\hat{v}_k(x)\big|\leq \|c\|_\infty\frac{\beta^k}{1-\beta},\quad \big|J_\beta(x_0,\hat{\gamma})-v_k(x_0)\big|\leq \|c\|_\infty\frac{\beta^k}{1-\beta}.
\end{align}

We now claim and prove by induction that 
\begin{align*}
\left|\hat{v}_{k}(x_0)-v_{k}(x_0)\right|\leq \left(\alpha_c+\frac{\beta\|c\|_\infty\alpha_T}{1-\beta}\right)\sum_{t=0}^{k-1}\beta^t\sup_{\gamma\in\Gamma}E_{x_0}^\gamma\left[L(X_t)\right]+\beta^k\alpha_c\sup_{\gamma\in\Gamma}E_{x_0}^\gamma\left[L(X_k)\right].
\end{align*}
For $k=0$:
\begin{align*}
v_{0}(x_0)=c(x_0,\hat{\gamma}(x_0)),\qquad
\hat{v}_{0}(x_0)=\int_{x\in B_0} c(x,\hat{\gamma}(x_0))\hat{\pi}_{x_0}(dx) 
\end{align*}
it can be seen that  $|v_0(x_0)-\hat{v}_0(x_0)|\leq \alpha_c\int_{B_0}\|x_0-x\|\hat{\pi}_{x_0}^*(dx)=\alpha_cL(x_0)$.

For a general $k$, we write
\begin{align*}
&\left|v_{k+1}(x_0)-\hat{v}_{k+1}(x_0)\right|\leq \int_{x\in B_0}\left|c(x_0,\hat{\gamma}(x_0))- c(x,\hat{\gamma}(x_0))\right|\hat{\pi}_{x_0}(dx)\\
& +\beta \int_{x\in B_0}\left|\int_{x_1\in\mathds{X}}v_k(x_1)\mathcal{T}(dx_1|x_0,\hat{\gamma}(x_0))-\int_{x_1\in\mathds{X}}\hat{v}_k(x_1) \mathcal{T}(dx_1|x,\hat{\gamma}(x_0))\right|\hat{\pi}_{x_0}(dx)\\
&\leq \alpha_c L(x_0)+\beta \left|\int_{x_1\in\mathds{X}}v_k(x_1)\mathcal{T}(dx_1|x_0,\hat{\gamma}(x_0))-\int_{x_1\in\mathds{X}}\hat{v}_k(x_1) \mathcal{T}(dx_1|x_0,\hat{\gamma}(x_0))\right|\\
&\qquad+\beta \int_{x\in B_0}\left|\int_{x_1\in\mathds{X}}\hat{v}_k(x_1) \mathcal{T}(dx_1|x_0,\hat{\gamma}(x_0))-\int_{x_1\in\mathds{X}}\hat{v}_k(x_1) \mathcal{T}(dx_1|x,\hat{\gamma}(x_0))\right|\hat{\pi}_{x_0}(dx)\\
&\leq \alpha_c L(x_0)+\beta\sup_{\gamma\in\Gamma}E^\gamma_{x_0}\left[|v_k(X_1)-\hat{v}_k(X_1)|\right]+\beta \|\hat{v}_k\|_\infty\alpha_T\int_{x\in B_0}\|x-x_0\|\hat{\pi}_{x_0}(dx)\\
&\leq \alpha_c L(x_0)+\beta\sup_{\gamma\in\Gamma}E^\gamma_{x_0}\left[ \left(\alpha_c+\frac{\beta\|c\|_\infty\alpha_T}{1-\beta}\right)\sum_{t=1}^{k}\beta^{t-1}\sup_{\gamma\in\Gamma}E_{X_1}^\gamma\left[L(X_t)\right]+\beta^k\alpha_c\sup_{\gamma\in\Gamma}E_{X_1}^\gamma\left[L(X_{k+1})\right]\right]\\
&\quad+\beta \|\hat{J}_\beta\|_\infty\alpha_TL(x_0)\\
&\leq \left(\alpha_c+\frac{\beta\|c\|_\infty\alpha_T}{1-\beta}\right)\sum_{t=0}^{k}\beta^t\sup_{\gamma\in\Gamma}E_{x_0}^\gamma\left[L(X_t)\right]+\beta^{k+1}\alpha_c\sup_{\gamma\in\Gamma}E_{x_0}^\gamma\left[L(X_{k+1})\right]
\end{align*}
where for the last two inequalities, we used the induction step and law of iterated expectations with the fact that $\|\hat{v}_k\|_\infty\leq \|\hat{J}_\beta\|_\infty\leq\frac{\|c\|_\infty}{1-\beta}$.

Using (\ref{value_bounds}), we can conclude that
\begin{align*}
 \left|J_\beta(x_0,\hat{\gamma})-\hat{J}_\beta(x_0)\right|\leq \left(\alpha_c+\frac{\beta\|c\|_\infty\alpha_T}{1-\beta}\right)\sum_{t=0}^{\infty}\beta^t\sup_{\gamma\in\Gamma}E_{x_0}^\gamma\left[L(X_t)\right]
\end{align*}
which completes the proof together with Theorem \ref{tv_thm}.
\end{proof}

\section{Proof of Theorem \ref{wass_thm}}\label{wass_thm_proof}
\begin{proof}
As in the proof of Theorem \ref{tv_thm}, we start with
\begin{align*}
&\left|\hat{J}_\beta(x_0)-J^*_\beta(x_0)\right|\leq \sup_{u\in\mathds{U}}\left|c(x_0,u)-\int_{x\in B_0} c(x,u)\hat{\pi}_{x_0}(dx)\right| \\
&\qquad+ \beta \sup_{u\in\mathds{U}}\left| \int J^*_\beta(x_1)\mathcal{T}(dx_1|x_0,u)-\int_{x\in B_0}\int_{x_1\in\mathds{X}}\hat{J}_\beta(x_1) \mathcal{T}(dx_1|x,u)\hat{\pi}_{x_0}(dx)\right|\\
&= \sup_{u\in\mathds{U}}\left|\int_{x\in B_0}c(x_0,u)- c(x,u)\hat{\pi}_{x_0}(dx)\right|\\
&\qquad+ \beta\sup_{u\in\mathds{U}}\left|\int_{x\in B_0}\left( \int J^*_\beta(x_1)\mathcal{T}(dx_1|x_0,u)-\int_{x_1\in\mathds{X}}\hat{J}_\beta(x_1) \mathcal{T}(dx_1|x,u)\right)\hat{\pi}_{x_0}(dx)\right|
\end{align*}
For the first term, we write
\begin{align*}
\sup_{u\in\mathds{U}}\left|\int_{x\in B_0}c(x_0,u)- c(x,u)\hat{\pi}_{x_0}(dx)\right|\leq  \sup_{u\in\mathds{U}}\int_{x\in B_0}\alpha_c|x_0-x|\hat{\pi}_{x_0}(dx)\leq \alpha_c\bar{L}.
\end{align*}
For the second term:
\begin{align*}
& \sup_{u\in\mathds{U}}\left|\int_{x\in B_0}\left( \int J^*_\beta(x_1)\mathcal{T}(dx_1|x_0,u)-\int_{x_1\in\mathds{X}}\hat{J}_\beta(x_1) \mathcal{T}(dx_1|x,u)\right)\hat{\pi}_{x_0}(dx)\right|\\
&\leq \int_{x\in B_0}\int_{x_1\in\mathds{X}}\left|J^*_\beta(x_1)-\hat{J}_\beta(x_1)\right| \mathcal{T}(dx_1|x,u)\hat{\pi}_{x_0}(dx)\\
&\qquad + \int_{x\in B_0}\left|\int_{x_1\in\mathds{X}}J^*_\beta(x_1) \mathcal{T}(dx_1|x_0,u)-\int_{x_1\in\mathds{X}}J^*_\beta(x_1) \mathcal{T}(dx_1|x,u)\right|\hat{\pi}_{x_0}(dx)\\
&\leq \sup_{x\in\mathds{X}}\left|\hat{J}_\beta(x)-J^*_\beta(x)\right|+\alpha_T\bar{L} \|J^*_\beta\|_L
\end{align*}
where $\|J^*_\beta\|_L$ denotes the Lipschitz constant of $J^*_\beta$ that is $\|J^*_\beta\|_L:=\sup_{x\neq x'}\frac{|J^*_\beta(x)-J^*_\beta(x')|}{|x-x'|}$. 

Combining what we have, we write
\begin{align*}
\sup_{x\in\mathds{X}}\left|\hat{J}_\beta(x)-J^*_\beta(x)\right|\leq \alpha_c \bar{L} + \beta\sup_{x\in\mathds{X}}\left|\hat{J}_\beta(x)-J^*_\beta(x)\right|+\beta\alpha_T\bar{L} \|J^*_\beta\|_L.
\end{align*}
Hence, we can conclude
\begin{align*}
\sup_{x\in\mathds{X}}\left|\hat{J}_\beta(x)-J^*_\beta(x)\right|\leq\frac{\alpha_c+\beta\alpha_T\|J_\beta^*\|_L}{1-\beta}\bar{L}.
\end{align*}
The result follows by noting that $\|J_\beta^*\|_L\leq \frac{\alpha_c}{1-\beta\alpha_T}$ \cite[Theorem 4.37]{SaLiYuSpringer}.
\end{proof}

\section{Proof of Theorem \ref{wass_thm_robust}}\label{wass_thm_robust_proof}
\begin{proof}
We again begin with the following initial bound:
\begin{align*}
\left|J_\beta(x_0,\hat{\gamma})-J^*_\beta(x_0)\right|\leq \left|J_\beta(x_0,\hat{\gamma})-\hat{J}_\beta(x_0)\right|+\left|\hat{J}_\beta(x_0)-J^*_\beta(x_0)\right|.
\end{align*}
Note that the second term is bounded by Theorem \ref{wass_thm}. We now focus on the first term. We have the following fixed point equation for $J_\beta(x_0,\hat{\gamma})$:
\begin{align*}
J_\beta(x_0,\hat{\gamma})=c(x_0,\hat{\gamma}(x_0))+\beta\int J_\beta(x_1,\hat{\gamma})\mathcal{T}(dx_1|x_0,\hat{\gamma}(x_0))
\end{align*}

Furthermore, the following fixed point equation can be written for $\hat{J}_\beta(x_0)$
\begin{align*}
\hat{J}_\beta(x_0)=\int_{x\in B_0} c(x,\hat{\gamma}(x_0))\hat{\pi}_{x_0}(dx) +\beta \int_{x\in B_0}\int_{x_1\in\mathds{X}}\hat{J}_\beta(x_1) \mathcal{T}(dx_1|x,\hat{\gamma}(x_0))\hat{\pi}_{x_0}(dx).
\end{align*}
With the given fixed point equations, we can write
\begin{align*}
&\left|J_\beta(x_0,\hat{\gamma})-\hat{J}_\beta(x_0)\right|\leq \alpha_c \bar{L}+\beta\int \left|J_\beta(x_1,\hat{\gamma})-\hat{J}_\beta(x_1)\right|\mathcal{T}(dx_1|x_0,\hat{\gamma}(x_0))\\
&\qquad+\beta \int_{x\in B_0}\left|\int_{x_1\in\mathds{X}}\hat{J}_\beta(x_1) \mathcal{T}(dx_1|x_0,\hat{\gamma}(x_0))-\int_{x_1\in\mathds{X}}\hat{J}_\beta(x_1) \mathcal{T}(dx_1|x,\hat{\gamma}(x_0))\right|\hat{\pi}_{x_0}(dx)\\
&\leq\alpha_c \bar{L}+\beta\sup_{x_0\in\mathds{X}}\left|J_\beta(x_0,\hat{\gamma})-\hat{J}_\beta(x_0)\right|\\
&\qquad+\beta \int_{x\in B_0}\left|\int_{x_1\in\mathds{X}}\hat{J}_\beta(x_1) \mathcal{T}(dx_1|x_0,\hat{\gamma}(x_0))-\int_{x_1\in\mathds{X}}J^*_\beta(x_1) \mathcal{T}(dx_1|x_0,\hat{\gamma}(x_0))\right|\hat{\pi}_{x_0}(dx)\\
&\qquad+\beta \int_{x\in B_0}\left|\int_{x_1\in\mathds{X}}J^*_\beta(x_1) \mathcal{T}(dx_1|x_0,\hat{\gamma}(x_0))-\int_{x_1\in\mathds{X}}J^*_\beta(x_1) \mathcal{T}(dx_1|x,\hat{\gamma}(x_0))\right|\hat{\pi}_{x_0}(dx)\\
&\qquad+\beta \int_{x\in B_0}\left|\int_{x_1\in\mathds{X}}J^*_\beta(x_1) \mathcal{T}(dx_1|x,\hat{\gamma}(x_0))-\int_{x_1\in\mathds{X}}\hat{J}_\beta(x_1) \mathcal{T}(dx_1|x,\hat{\gamma}(x_0))\right|\hat{\pi}_{x_0}(dx)\\
&\leq\alpha_c\bar{L}+\beta \sup_{x_0\in\mathds{X}}\left|J_\beta(x_0,\hat{\gamma})-\hat{J}_\beta(x_0)\right|+2\beta\sup_{x_0\in\mathds{X}}\left|\hat{J}_\beta(x_0)-J^*_\beta(x_0)\right|+\beta\|J_\beta^*\|_L\alpha_T\bar{L}
\end{align*}
Using Theorem \ref{wass_thm} and noting that  $\|J_\beta^*\|_L\leq \frac{\alpha_c}{1-\beta\alpha_T}$  (\cite[Theorem 4.37]{SaLiYuSpringer}), we can write
\begin{align*}
&\left|J_\beta(x_0,\hat{\gamma})-\hat{J}_\beta(x_0)\right|\leq \left(\alpha_c+\frac{2\beta\alpha_c}{(1-\beta\alpha_T)(1-\beta)}+\frac{\beta\alpha_c\alpha_T}{1-\beta\alpha_T}\right)\bar{L}+\beta \sup_{x_0\in\mathds{X}}\left|J_\beta(x_0,\hat{\gamma})-\hat{J}_\beta(x_0)\right|
\end{align*}
Thus, by taking the supremum on the left hand side, we can conclude
\begin{align*}
 \sup_{x_0\in\mathds{X}}\left|J_\beta(x_0,\hat{\gamma})-\hat{J}_\beta(x_0)\right|\leq  \frac{(1+\beta)\alpha_c}{(1-\beta)^2(1-\beta\alpha_T)}{\bar{L}}.
\end{align*}
Finally, by collecting everything we have so far, we can write
\begin{align*}
\left|J_\beta(x_0,\hat{\gamma})-J^*_\beta(x_0)\right|&\leq  \frac{(1+\beta)\alpha_c}{(1-\beta)^2(1-\beta\alpha_T)}{\bar{L}}+\frac{\alpha_c}{(1-\beta\alpha_T)(1-\beta)}\bar{L}\\
&\leq \frac{2\alpha_c}{(1-\beta)^2(1-\beta\alpha_T)}\bar{L}.
\end{align*}
\end{proof}


\end{document}